%% file: main.tex
\documentclass[11pt,letterpaper]{article}

\usepackage[margin=1in]{geometry}
\usepackage[T1]{fontenc}
\usepackage{lmodern}
\usepackage{microtype}
\input{preamble}

\title{Finite-Sample Guarantees for Fully Discrete\\Conditional Diffusion Models}
\author{Qiuyun Zhu\\[4pt]
  \small Department of Statistics and Actuarial Science\\
  \small University of Iowa\\
  \small\href{mailto:qiuyun-zhu@uiowa.edu}{\texttt{qiuyun-zhu@uiowa.edu}}}
\date{}

\hypersetup{
  hidelinks,
  pdftitle={Finite-Sample Guarantees for Fully Discrete Conditional Diffusion Models},
  pdfauthor={Qiuyun Zhu},
  pdfsubject={Conditional diffusion, finite-sample guarantees, and discrete sampling}
}

\begin{document}
\maketitle
\input{paper}
\end{document}

%% file: preamble.tex
\usepackage{amsmath,amssymb,amsfonts,amsthm}
\usepackage[round,authoryear]{natbib}
\usepackage[hidelinks]{hyperref}

\newcommand{\rev}[1]{#1}
\DeclareMathOperator*{\argmin}{arg\,min}

\theoremstyle{plain}
\newtheorem{theorem}{Theorem}
\newtheorem{proposition}[theorem]{Proposition}
\newtheorem{lemma}[theorem]{Lemma}

\newtheorem{assumption}{Assumption}

\theoremstyle{definition}
\newtheorem{definition}[theorem]{Definition}
\newtheorem{remark}{Remark}

%% file: paper.tex
\newcommand{\R}{\mathbb R}
\newcommand{\mx}{\mathbf x}
\newcommand{\mz}{\mathbf z}
\newcommand{\my}{\mathbf y}
\newcommand{\ms}{\mathbf s}
\newcommand{\mI}{\mathbf I}
\newcommand{\cF}{\mathcal F}
\newcommand{\cG}{\mathcal G}
\newcommand{\cH}{\mathcal H}
\newcommand{\cN}{\mathcal N}
\newcommand{\cE}{\mathcal E}
\newcommand{\KLdiv}{\operatorname{KL}}
\newcommand{\Var}{\operatorname{Var}}
\newcommand{\ind}{\mathbf 1}
\newcommand{\dd}{\,\mathrm d}

\begin{abstract}
We establish finite-sample guarantees for conditional diffusion models learned from data and implemented with finitely many denoising steps. Our approach aligns the score-training objective with the sampling grid, allowing statistical estimation and numerical discretization errors to be analyzed together. Under a smooth conditional density model, we derive high-probability bounds on the generated distribution's error and quantify the required number of score evaluations. For every smoothness index $\beta>0$, a suitable grid yields averaged conditional Kullback--Leibler (KL) divergence $\widetilde O_p(n^{-2\beta/(d_x+d_y+2\beta)})$ relative to the original target \rev{with or without early stopping}. Both the full and early-stopped samplers attain average target total-variation (TV) error $\widetilde O_p(n^{-\beta/(d_x+d_y+2\beta)})$. Here $n$ is the sample size, $d_x,d_y$ are the data and conditioning dimensions, and errors are averaged over the conditioning variable. The total-variation bound matches the rate established in existing conditional distribution-estimation theory, up to logarithmic factors.
\end{abstract}

\input{introduction}

\section{Problem Setup}\label{sec:model}

Let $(\mx_0,\my)$ be a random pair with $\mx_0\in\R^{d_x}$ and $\my\in\mathcal Y$, where $\mathcal Y\subset\R^{d_y}$ is compact. For each $\my$, let $q_0(\cdot\mid\my)$ denote the conditional density of the data.

\subsection{Forward and Reverse Processes}

Our analysis adopts the Ornstein--Uhlenbeck (OU) forward diffusion
\begin{equation}
 \dd\mx_t=-\mx_t\dd t+\sqrt{2}\,\dd\omega_t,
 \qquad \mx_0\sim q_0(\cdot\mid\my),
 \label{eq:forward}
\end{equation}
where $\{\omega_t\}_{t\ge0}$ is a standard Brownian motion. Conditional on $\my$, the dynamics in \eqref{eq:forward} induce the marginal law $q_t(\cdot\mid\my)$ of $\mx_t$. Moreover, the explicit OU solution yields the Gaussian transition kernel
\begin{equation}
 q_{t\mid0}(\cdot\mid\mx_0)
 =\cN(\mu_t\mx_0,\sigma_t^2\mI_{d_x}),
 \qquad \mu_t:=e^{-t},\qquad \sigma_t^2:=1-e^{-2t}.
 \label{eq:transition}
\end{equation}
Thus, $q_t(\cdot\mid\my)$ converges to $\cN(\mathbf0,\mI_{d_x})$ as $t\to\infty$.

Because the transition kernel is Gaussian, both score matching and initialization are tractable. Generating data, however, requires reversal of the forward dynamics. Accordingly, for a terminal time $T$, the time-reversed process $\mz_u:=\mx_{T-u}$ satisfies
\begin{equation*}
 \dd\mz_u
 =\left[\mz_u+2\nabla\log q_{T-u}(\mz_u\mid\my)\right]\dd u
 +\sqrt{2}\,\dd\bar\omega_u,
 \qquad \mz_0\sim q_T(\cdot\mid\my),
\end{equation*}
\rev{where, conditional on $\my$, $\{\bar\omega_u\}_{0\le u\le T}$ is a standard $d_x$-dimensional Brownian motion with respect to the reverse-time filtration, and $\nabla\log q_t(\cdot\mid\my)$ denotes the conditional score. Here $u:=T-t$ increases from $0$ to $T$ as the process evolves from the noised law $q_T(\cdot\mid\my)$ toward the data law $q_0(\cdot\mid\my)$.}


The gradient $\nabla\log q_t(\mathbf{x}\mid\my)$ is taken with respect to $\mathbf{x}$, holding $\mathbf{y}$ fixed. With the exact score and initialization from $q_T(\cdot\mid\mathbf{y})$, the reverse process has marginal $q_{T-u}(\cdot\mid\mathbf{y})$ at reverse time $u$. In practice, we use a learned score, initialize from a standard Gaussian, and discretize the reverse dynamics. Our analysis bounds the resulting sampling error through these three contributions: score estimation, initialization, and time discretization.

\subsection{Discretization and Implementation}

The exact reverse process depends on the unknown score at every time. Therefore, an implementable algorithm replaces this score by a learned approximation evaluated on a finite grid. Let
\begin{equation*}
 t_0:=0<t_1<\cdots<t_N:=T,
 \qquad h_k:=t_k-t_{k-1}.
\end{equation*}
\rev{The sampler traverses the forward-time grid in reverse, from $t_N=T$ to $t_0=0$, with $\widehat\mx_k$ denoting its state at noise level $t_k$.} At each $t_k$, let $\widehat\ms(\mx,\my,t_k)$ be the estimated conditional score and define $\widehat\ms_{t_k}(\mx,\my):=\widehat\ms(\mx,\my,t_k)$. We discretize the reverse dynamics using an exponential integrator that integrates the linear component of the reverse drift exactly. The score is held fixed over each interval. Conditional on $\my$, initialize
\[
 \widehat\mx_T\sim\cN(\mathbf0,\mI_{d_x}),
\]
and set
\begin{equation}
 \begin{aligned}
 \widehat\mx_{t_{k-1}}
 &:=e^{h_k}\widehat\mx_{t_k}
 +2(e^{h_k}-1)\widehat\ms_{t_k}(\widehat\mx_{t_k},\my)
 +\sqrt{e^{2h_k}-1}\,\boldsymbol\xi_k,\\
 \boldsymbol\xi_k&\stackrel{\mathrm{i.i.d.}}\sim\cN(\mathbf0,\mI_{d_x}),
 \qquad k=N,N-1,\ldots,1,
 \end{aligned}
 \label{eq:exp-integrator}
\end{equation}
\rev{Note that the Gaussian noises \( \{\boldsymbol{\xi}_{k} \}_{k = 1}^N \) are independent of the initialization and training data. The full sampler performs all $N$ steps and returns $\widehat\mx_0$ at forward time $0$, while early stopping omits the final step and returns $\widehat\mx_1$ at forward time $t_1$. Generating one sample requires $N$ score evaluations for the full sampler and $N-1$ for its early-stopped version. Given $\my$, we let $\widehat p_{T-t}(\cdot\mid\my)$ denote the conditional distribution of $\widehat\mx_t$} at forward time $t$. 
Equivalently, for each $k = 1, \ldots, N$, on the reverse-time interval $[T-t_k,T-t_{k-1}]$, the process solves
\[
 \dd\widehat\mz_u=
 \left[\widehat\mz_u+2\widehat\ms_{t_k}
 (\widehat\mz_{T-t_k},\my)\right]\dd u
 +\sqrt{2}\,\dd\bar\omega_u.
\]
\rev{Both the early-stopped and full samplers aim to recover the target conditional law $q_0(\cdot\mid\my)$. }

\section{Learning the Conditional Score from Data}\label{sec:score}

\rev{We first recall the conditional-score approximation under our density assumptions, and then analyze a denoising objective on the sampling grid.}

\rev{\subsection{Score Approximation}\label{sec:approximation}}

\rev{This subsection specifies the conditional density and network classes and recalls the score-approximation result of \citet{Fu2024UnveilCD}, which provides the score estimation rate in our statistical analysis.}

\rev{We first specify the smoothness class. For a multi-index $\alpha=(\alpha_1,\ldots,\alpha_d)$ of nonnegative integers, write $\|\alpha\|_1:=\sum_{j=1}^d\alpha_j$ and $\partial^\alpha:=\partial_1^{\alpha_1}\cdots\partial_d^{\alpha_d}$.}

\begin{definition}[H\"older class]
Let $\beta>0$, and set $r:=\lfloor\beta\rfloor$ and $\eta:=\beta-r\in[0,1)$. For $f:\R^d\to\R$, define
\[
 \|f\|_{\cH^\beta}
 :=\max_{\|\alpha\|_1\le r}\sup_x|\partial^\alpha f(x)|
 +\ind\{\eta>0\}
 \max_{\|\alpha\|_1=r}\sup_{x\ne z}
 \frac{|\partial^\alpha f(x)-\partial^\alpha f(z)|}
 {\|x-z\|_\infty^\eta},
\]
where the associated H\"older ball of radius $B$ is denoted by $\cH^\beta(\R^d,B)$.
\end{definition}

\begin{assumption}[Conditional density model]\label{ass:density}
 Fix a smoothness index $\beta>0$ and assume there exist a constant $R_y>0$, a compact set $\mathcal Y\subset[-R_y,R_y]^{d_y}$, constants $C_1,C_2,B>0$, and a function $f$ on $\R^{d_x}\times\mathcal Y$ such that
\begin{equation}
 f(\mx,\my)\ge C_1,
 \qquad
 q_0(\mx\mid\my)
 =\exp\!\left(-\frac{C_2}{2}\|\mx\|_2^2\right)f(\mx,\my).
 \label{eq:density-model}
\end{equation}
Moreover, $f$ admits an extension $\bar f\in\cH^\beta(\R^{d_x+d_y},B)$ satisfying $\bar f=f$ on $\R^{d_x}\times\mathcal Y$ and $\bar f\ge C_1$ on $\R^{d_x+d_y}$.
\end{assumption}


Assumption 1 implies
\(
C_1 e^{-C_2\|\mathbf{x}\|_2^2/2}
\le q_0(\mathbf{x}\mid\mathbf{y})
\le B e^{-C_2\|\mathbf{x}\|_2^2/2}
\),
uniformly over \(\mathbf{x}\in\mathbb{R}^{d_x}\) and \(\mathbf{y}\in\mathcal{Y}\).
As a result, all conditional moments of \(\mathbf{x}_0\) are uniformly bounded in \(\mathbf{y}\).
Joint H\"older smoothness of $\bar f$ in \((\mathbf{x},\mathbf{y})\) accounts for the dimension \(D:=d_x+d_y\) in the approximation rate, and no density assumption is imposed on the marginal distribution of \(\mathbf{y}\).

Let $\cF(M_t,W,\kappa,L,S)$ be the class of vector-valued ReLU networks $\ms(\mx,\my,t)$ with depth $L$, maximal width $W$, at most $S$ nonzero parameters, parameter magnitudes bounded by $\kappa$, and the time-dependent output envelope
\[
 \sup_{\mx,\my}\|\ms(\mx,\my,t)\|_\infty\le M_t
\]
at every time $t$ in the approximation range. Each network takes $(\mx,\my,t)$ as input. We let $K$ denote the network complexity parameter and $N$ the number of reverse steps.

\rev{The following lemma records Theorem~3.4 of \citet{Fu2024UnveilCD} in a compact space $\mathcal Y$.}

\rev{\begin{lemma}[Conditional-score approximation; {\citealp{Fu2024UnveilCD}}]\label{lem:approximation}
Suppose Assumption~\ref{ass:density} holds, and fix constants $C_-,C_+>0$. For sufficiently large $K$, there exists $\widetilde\ms\in\cF(M_t,W,\kappa,L,S)$ such that, uniformly over $\my\in\mathcal Y$ and $t\in[K^{-C_-},C_+\log K]$,
\begin{equation}
 \mathbb E_{q_t(\cdot\mid\my)}
 \|\widetilde\ms_t(\mx_t,\my)-\nabla\log q_t(\mx_t\mid\my)\|_2^2
 \le C\frac{K^{-2\beta/D}(\log K)^{\beta+1}}{\sigma_t^2},
 \label{eq:pointwise-approx}
\end{equation}
where the network parameters satisfy
\begin{equation*}
 M_t\lesssim\frac{\sqrt{\log K}}{\sigma_t},\qquad
 W\lesssim K\log^7K,\qquad
  L\lesssim\log^4K,
\qquad
S\lesssim K\log^9K,
\qquad
\log\kappa\lesssim\log^4K.
\end{equation*}
All implicit constants in this lemma depend only on $d_x,d_y,\beta,C_1,C_2,B,R_y,C_-,C_+$.
\end{lemma}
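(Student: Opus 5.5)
This lemma is a restatement of Theorem~3.4 of \citet{Fu2024UnveilCD}. The plan is therefore to check that its hypotheses are met by Assumption~\ref{ass:density} and that its conclusion, written in our notation, gives \eqref{eq:pointwise-approx} with the stated architecture bounds. I do not intend to rebuild the network from scratch. Still, the structure of their construction shows what has to be verified, so I follow it.

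The first step is to write the score through the Gaussian kernel \eqref{eq:transition}. Set
\[
 p_t(\mx,\my):=\int f(\mx_0,\my)e^{-C_2\|\mx_0\|_2^2/2}\varphi_{\sigma_t}(\mx-\mu_t\mx_0)\dd\mx_0 .
\]
Then
\[
 \nabla\log q_t(\mx\mid\my)=\frac{\nabla p_t(\mx,\my)}{p_t(\mx,\my)} ,
\]
where both the numerator and the denominator are Gaussian-weighted integrals of $\bar f$. Assumption~\ref{ass:density} gives $C_1\le\bar f\le B$, which yields a lower bound of Gaussian type on $p_t$. It also gives $\|\nabla\log q_t\|\lesssim(\|\mx\|+1)/\sigma_t$, with Gaussian tails.

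Second, I truncate space to $\|\mx\|_\infty\lesssim\sqrt{\log K}$. The Gaussian tails make the mass outside this region, weighted by the squared score, of order $K^{-2\beta/D}/\sigma_t^2$. This truncation is also why the output envelope is $M_t\asymp\sqrt{\log K}/\sigma_t$.

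Third, on the truncated region I approximate $\bar f$, as a function of $(\mx_0,\my)$ on a cube of side $\sqrt{\log K}$ in $D$ dimensions, by a local Taylor polynomial of degree $\lfloor\beta\rfloor$. The Gaussian integral of each monomial against the diffused kernel is explicit, and I approximate these integrals by ReLU networks in $(\mx,\my,t)$ using Taylor-expanded exponentials. Compact $\mathcal Y\subset[-R_y,R_y]^{d_y}$ is needed here, because $\my$ enters as a bounded input coordinate. The extension $\bar f$ together with joint H\"older smoothness gives an error of order $K^{-\beta/D}$ (times polylog) in $D$ dimensions, not $d_x$. The quotient $\nabla p_t/p_t$ is then realized by a clipped reciprocal network, using the lower bound $p_t\gtrsim$ a Gaussian factor. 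Finally, the error is squared and integrated against $q_t$.

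I expect the \emph{main obstacle} to be small $t$. There $\sigma_t^2\to0$, and dividing by $p_t$ amplifies the approximation errors. The factor $1/\sigma_t^2$ and the restriction $t\ge K^{-C_-}$ absorb this: they keep $\log(1/\sigma_t)\lesssim\log K$, so the extra logarithmic factors enter only as $(\log K)^{\beta+1}$. At the other end, $t\le C_+\log K$ keeps the time input bounded by $\log K$. Counting the sizes of these subnetworks gives $W\lesssim K\log^7K$, $L\lesssim\log^4K$, $S\lesssim K\log^9K$ and $\log\kappa\lesssim\log^4K$. All these bounds are uniform in $\my\in\mathcal Y$ and $t$, since every constant depends only on $d_x,d_y,\beta,C_1,C_2,B,R_y,C_\pm$.
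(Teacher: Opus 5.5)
You say the plan is to verify the hypotheses of Theorem~3.4 of \citet{Fu2024UnveilCD}, but that check is never carried out. What follows is a paraphrase of their internal construction (truncation, local Taylor polynomials, diffused monomials, clipped reciprocal), with the polylogarithmic exponents and architecture counts asserted rather than derived. As an independent proof this is incomplete. As a reduction to the cited theorem, it skips the one point that needs an argument. Their theorem is stated for a conditioning variable ranging over the whole cube $[0,1]^{d_y}$, with $q_0(\cdot\mid\mathbf y)$ a normalized Gaussian-factor H\"older density for \emph{every} $\mathbf y$ in that cube. Assumption~\ref{ass:density} supplies such densities only for $\mathbf y\in\mathcal Y$, an arbitrary compact subset of $[-R_y,R_y]^{d_y}$. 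The extension $\bar f$ is H\"older and bounded below everywhere, but $e^{-C_2\|\mathbf x\|_2^2/2}\bar f(\mathbf x,\mathbf y)$ need not integrate to one when $\mathbf y\notin\mathcal Y$. So it does not define conditional densities on a cube, and the theorem cannot be invoked as it stands. Your remark that compactness matters because $\mathbf y$ is a bounded input coordinate does not address this mismatch.

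The missing step, which is exactly the paper's proof, is to normalize and rescale. Set
\[
 Z(\mathbf y):=\int_{\mathbb R^{d_x}}e^{-C_2\|\mathbf x\|_2^2/2}\bar f(\mathbf x,\mathbf y)\,\mathrm d\mathbf x .
\]
Then $C_1(2\pi/C_2)^{d_x/2}\le Z\le B(2\pi/C_2)^{d_x/2}$, $Z\equiv1$ on $\mathcal Y$, and differentiating under the Gaussian integral shows $Z$ is $\beta$-H\"older in $\mathbf y$. Since $Z$ is bounded below, $\bar f/Z$ is positive and H\"older with radius depending only on the model constants. It defines genuine conditional densities for all $\mathbf y\in[-R_y,R_y]^{d_y}$ and coincides with $q_0$ on $\mathcal Y$. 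Next, map this cube affinely onto $[0,1]^{d_y}$; the H\"older radius grows by a factor depending only on $R_y$ and $\beta$. Apply Theorem~3.4, then precompose the resulting network with the affine map in $\mathbf y$. This map is absorbed into the first layer and changes $\kappa$ and $S$ only by $R_y$- and $d_y$-dependent constants and lower-order terms, so the stated architecture bounds survive. Because the normalized densities agree with $q_0$ on $\mathcal Y$, the bound uniform over the cube restricts to one uniform over $\mathcal Y$; this is also where the dependence of the constants on $R_y$ enters. (If you instead genuinely rebuilt the network, normalization would be unnecessary: $\nabla_{\mathbf x}\log$ ignores $\mathbf y$-dependent factors, and $\bar f=f$ on $\mathcal Y$. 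But then the construction and its size counts would have to be carried out, not sketched.)
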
}

\rev{The uniformity in Lemma~\ref{lem:approximation} allows a single conditional network to approximate the score at every time on the sampling grid.}

\rev{\subsection{Denoising Score Matching and Statistical Guarantee}}

We now use the approximation result in Section~3.1 to analyze score estimation from a finite training sample. We introduce a denoising loss whose population excess risk equals the weighted squared score error. Combining the approximation bound in Lemma~\ref{lem:approximation} with a concentration argument then yields a finite-sample guarantee for the empirical risk minimizer.

\rev{To distinguish the unknown target from a candidate network, we denote the true conditional score at time $t$ by $\ms_t^0$, defined as}
\[
 \ms_t^0(\mx,\my):=\nabla\log q_t(\mx\mid\my).
\]
\rev{We measure a candidate score's discrepancy from $\ms^0$ by the weighted squared $L^2$ error $\rho^2(\ms,\ms^0)$, which sums the mean squared errors at grid times with weights equal to the reverse-step lengths:}
\begin{equation}
 \rho^2(\ms,\ms^0)
 :=\sum_{k=1}^Nh_k\,
 \mathbb E_{\mx_{t_k},\my}
 \|\ms_{t_k}(\mx_{t_k},\my)-\ms_{t_k}^0(\mx_{t_k},\my)\|_2^2.
 \label{eq:rho}
\end{equation}
\rev{The expectation in \eqref{eq:rho} averages over the noisy data $\mx_{t_k}$ and the marginal distribution of $\my$. }

\rev{The unknown true score $\ms^0$ prevents direct minimization of the error $\rho^2(\ms,\ms^0)$. Instead, we use the denoising loss for an observed pair $(\mx_0,\my)$:}
\begin{equation}
 \ell(\mx_0,\my;\ms)
 :=\sum_{k=1}^Nh_k\,
 \mathbb E_{\mx_{t_k}\mid\mx_0}\bigl\|
 \ms_{t_k}(\mx_{t_k},\my)
 -\nabla\log q_{t_k\mid0}(\mx_{t_k}\mid\mx_0)
 \bigr\|_2^2.
 \label{eq:loss}
\end{equation}
The denoising loss $\ell(\mx_0,\my;\ms)$  compares the estimated score $\ms_{t_k}$ with the known Gaussian transition score $\nabla\log q_{t_k\mid0}$, averaging over artificial Gaussian noise introduced in \eqref{eq:transition}, at each $k$.
\rev{Conditioning on $(\mx_{t_k},\my)$ shows that $\nabla\log q_{t_k\mid0}$ has conditional mean $\ms^0_{t_k}(\mx_{t_k},\my)$. Consequently, the excess risk is exactly the weighted score error:}
\begin{equation}
 \mathbb E\ell(\mx_0,\my;\ms)-
 \mathbb E\ell(\mx_0,\my;\ms^0)
 =\rho^2(\ms,\ms^0).
 \label{eq:denoising-identity}
\end{equation}
\rev{For $n$ i.i.d. training pairs $\{(\mx_0^i,\my^i)\}_{i=1}^n$, let $\widehat{\mathcal L}_n$ be the empirical average loss. The fitted score $\widehat\ms$ is any measurable minimizer over the network class $\cF$:}
\begingroup
\thinmuskip=2mu
\medmuskip=2mu
\thickmuskip=3mu
\begin{equation}
 \widehat\ms\in\argmin_{\ms\in\cF}
 \widehat{\mathcal L}_n(\ms),
 \quad
 \widehat{\mathcal L}_n(\ms)
 :=\frac1n\sum_{i=1}^n\ell(\mx_0^i,\my^i;\ms).
 \label{eq:erm}
\end{equation}
\endgroup
The objective in \eqref{eq:erm} is empirical in the observed pairs but takes the exact expectation over the artificial noise. Note from the forward kernel:
\[
 \nabla\log q_{t_k\mid0}(\mx_{t_k}\mid\mx_0)
 =-\frac{\mx_{t_k}-\mu_{t_k}\mx_0}{\sigma_{t_k}^2}.
\]
\rev{Thus the empirical loss $\widehat{\mathcal L}_n$ is available without evaluating the unknown conditional density or its score.}

\rev{Next, we will introduce a bound on the score estimation error $\rho(\widehat\ms,\ms^0)$. As we shall see, the bound depends on the grid through the deterministic factor $$	A_n:=\sum_{k=1}^N\frac{h_k}{\sigma_{t_k}^2},$$ which  captures the greater sensitivity of score estimation at small noise levels. The following lemma provides both lower and upper bounds on $A_n$.}

\begin{lemma}[Bounds on the grid factor $A_n$]
	\label{lem:grid-factor}
	Let \(0=t_0<t_1<\cdots<t_N=T\), with step lengths
	\(h_k=t_k-t_{k-1}\) and noise variances
	\(\sigma_{t_k}^2=1-e^{-2t_k}\).
	Then
	\begin{equation}
		\begin{aligned}
			\max\{T,1/2\}
			&\le A_n
			\le C\bigl[1+T+\log^+(1/t_1)\bigr],
		\end{aligned}
		\label{eq:A-endpoints}
	\end{equation}
	where \(\log^+(a):=\max\{0,\log a\}\) and \(C\) is a
	universal constant independent of the grid.
	In particular, if the positive grid times lie in the
	approximation window \([K^{-C_-},C_+\log K]\) of
	Lemma~\ref{lem:approximation}, then 
	\[
	A_n\asymp\log K.
	\]
\end{lemma}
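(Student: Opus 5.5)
The plan is to compare the Riemann-type sum $A_n=\sum_k h_k/\sigma_{t_k}^2$ with the integral of $1/\sigma_t^2$, using that $t\mapsto\sigma_t^2=1-e^{-2t}$ is increasing. Since $\sigma_{t_k}^2\ge\sigma_t^2$ for $t\in[t_{k-1},t_k]$, each summand is a right-endpoint (lower) Riemann term.

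\textbf{Lower bound.} Because $\sigma_{t_k}^2\le 1$, we get $A_n\ge\sum_k h_k=T$ at once. For the bound $1/2$ I will use only the first term. The elementary inequality $1-e^{-2t_1}\le 2t_1$ gives $h_1/\sigma_{t_1}^2=t_1/(1-e^{-2t_1})\ge 1/2$. Hence $A_n\ge\max\{T,1/2\}$.

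\textbf{Upper bound.} I split off the first term and handle it separately. The function $t/(1-e^{-2t})$ is at most $1/2+t$, since it is convex with limit $1/2$ at $0$ and asymptote $t$; a cruder bound also suffices. So the first term is at most $C(1+T)$.

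For $k\ge2$, the interval $[t_{k-1},t_k]$ lies in $[t_1,T]$. The key inequality is the ratio $\sigma_t^2/\sigma_{t_k}^2$ bounded below, which gives $h_k/\sigma_{t_k}^2\le C\int_{t_{k-1}}^{t_k}\sigma_t^{-2}\dd t$. This is the main obstacle, because the ratio can be small when $t_{k-1}$ is tiny and $t_k$ is large. Instead I will use monotonicity directly: $h_k/\sigma_{t_k}^2\le\int_{t_{k-1}}^{t_k}\sigma_t^{-2}\dd t$, since $\sigma_t^{-2}\ge\sigma_{t_k}^{-2}$ on that interval. This inequality points the right way, so no ratio control is needed. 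Summing over $k\ge2$ gives at most $\int_{t_1}^T\sigma_t^{-2}\dd t$. Using $\sigma_t^2\ge c\min\{t,1\}$, this integral is at most $C[\log^+(1/t_1)+1+T]$: the part on $[t_1,1]$ is bounded by $\int_{t_1}^1 dt/(ct)$, and the part on $[1,T]$ is bounded by $(T-1)/\sigma_1^2$.

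\textbf{Consequence for $A_n\asymp\log K$.} If $t_1\ge K^{-C_-}$ and $T\le C_+\log K$, the upper bound gives $A_n\lesssim 1+C_+\log K+C_-\log K\lesssim\log K$. The matching lower bound $A_n\ge T$ requires $T\gtrsim\log K$. I will read the statement as taking $T$ at the order of the window's upper endpoint, that is $T\asymp\log K$ as used for initialization. Under that reading, $T\asymp\log K$ yields $A_n\gtrsim\log K$, and the constants depend only on $C_\pm$.
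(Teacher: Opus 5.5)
Your proposal is correct and follows essentially the paper's argument. Both use $\sigma_{t_k}^2\le 1$ and the first term alone for the lower bound, and for the upper bound both split off the first term and dominate the remaining right-endpoint sum by $\int_{t_1}^T\sigma_t^{-2}\,\mathrm{d}t$ via monotonicity. The only differences are that the paper evaluates this integral in closed form instead of splitting at $t=1$, and bounds the first term by $t_1+1/2$ via $e^{2t}-1\ge 2t$ instead of your convexity argument.

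Your caveat on the ``in particular'' claim is also right. The window alone yields only $A_n\lesssim\log K$: for instance, $N=2$, $t_1=K^{-C_-}$, $T=1$ lies in the window yet gives $A_n=O(1)$. The paper's proof likewise uses $T\ge c_T\log K$ tacitly, which holds for its calibrated choice $T=\frac{2\beta}{D+2\beta}\log n$ with $K\asymp n^{D/(D+2\beta)}$.
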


\emph{Proof:} See Appendix~\ref{sec:proof-score}.


\rev{The following theorem bounds the score estimation error in terms of the sample size, network complexity, and grid factor $A_n$.}

\begin{theorem}[Score-estimation rate]\label{thm:score}
Suppose Assumption~\ref{ass:density} holds and \rev{the positive grid times $t_1,\ldots,t_N$} lie in the approximation range of \rev{Lemma~\ref{lem:approximation}}. For sufficiently large $n$, \rev{choose the network class whose parameters satisfy the bounds in Lemma~\ref{lem:approximation}, with complexity parameter $K$ satisfying}
\begin{equation}
 K\asymp n^{D/(D+2\beta)}.
 \label{eq:K-choice}
\end{equation}
\rev{Define }
\begin{equation*}
 m:=\max\left\{9,(\beta+1)/2\right\} \quad\mbox{and}\quad
 \varepsilon_n
 :=C A_n^{1/2}n^{-\beta/(D+2\beta)}\log^m n.
\end{equation*}
For every fixed $M>0$, there exist constants $C,c>0$, depending only on $M$ and the fixed model parameters, we have
\begin{equation}
 \mathbb P\{\rho(\widehat\ms,\ms^0)>\varepsilon_n\}
 \le
 4\exp\!\left\{-c\frac{n^{D/(D+2\beta)}\log^{2m-2}n}{A_n}\right\}
 +C nK^{-M},
 \label{eq:score-tail}
\end{equation}
where probability is taken over the $n$ observed training pairs.
\rev{Therefore by Lemma~\ref{lem:grid-factor}, within the approximation window \([K^{-C_-},C_+\log K]\),  we have
\begin{equation}
 \rho(\widehat\ms,\ms^0)
 =O_p\!\left(n^{-\beta/(D+2\beta)}
 \log^{m+1/2}n\right).
 \label{eq:score-op}
\end{equation}}
\end{theorem}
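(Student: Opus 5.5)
The plan is the standard ERM analysis for a least-squares-type loss: a basic inequality, plus a localized empirical-process bound on a truncated loss class, with the approximation error from Lemma~\ref{lem:approximation}.

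\emph{Basic inequality.} Let $\widetilde\ms$ be the approximant from Lemma~\ref{lem:approximation} and write $\ell_\ms:=\ell(\cdot;\ms)-\ell(\cdot;\ms^0)$. By \eqref{eq:denoising-identity}, $\mathbb E\ell_\ms=\rho^2(\ms,\ms^0)$. Since $\widehat{\mathcal L}_n(\widehat\ms)\le\widehat{\mathcal L}_n(\widetilde\ms)$,
\[
 \rho^2(\widehat\ms,\ms^0)\le \rho^2(\widetilde\ms,\ms^0)+(\mathbb E-\mathbb E_n)\ell_{\widehat\ms}-(\mathbb E-\mathbb E_n)\ell_{\widetilde\ms}.
\]
Summing \eqref{eq:pointwise-approx} with weights $h_k$ gives the approximation term
\[
 \rho^2(\widetilde\ms,\ms^0)\lesssim A_nK^{-2\beta/D}(\log K)^{\beta+1}.
\]
With \eqref{eq:K-choice} this is $A_n n^{-2\beta/(D+2\beta)}\log^{\beta+1}n\le\varepsilon_n^2/4$, because $2m\ge\beta+1$.

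\emph{Truncation.} The loss is unbounded because $\mx_0$ is. I would work on the event $E=\{\max_i\|\mx_0^i\|_\infty\le R_n\}$ with $R_n\asymp\sqrt{\log K}$. By the Gaussian-type tail bound $q_0\le Be^{-C_2\|\mx\|^2/2}$ and a union bound, $\mathbb P(E^c)\le CnK^{-M}$ once the constant in $R_n$ is large enough; this is the second term in \eqref{eq:score-tail}. On $E$, and after also truncating in the population expectation at a cost of $O(K^{-M})$, the envelope is controlled. Using $M_t\lesssim\sqrt{\log K}/\sigma_t$ and $\mathbb E_{\mx_t\mid\mx_0}\|(\mx_t-\mu_t\mx_0)/\sigma_t^2\|^2=d_x/\sigma_t^2$, each $\ell_\ms$ is bounded by $b_n\lesssim A_n\log K$. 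The same computation gives the variance--mean relation $\Var(\ell_\ms)\lesssim b_n\,\mathbb E\ell_\ms$: the difference of squares factors as $(\ms-\ms^0)\cdot(\ms+\ms^0-2\nabla\log q_{t\mid0})$, and Cauchy--Schwarz with the $h_k$ weights is applied to the first factor.

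\emph{Localized concentration.} Next I would cover $\cF$ in sup-norm on the truncated domain. Standard ReLU covering bounds give
\[
 \log\mathcal N(\delta)\lesssim SL\log(\kappa WL/\delta)\lesssim K\log^{13}K\cdot\log(1/\delta).
\]
I would use a peeling argument over shells $\{2^{j-1}\varepsilon_n^2<\rho^2\le 2^j\varepsilon_n^2\}$, apply Bernstein's inequality with the variance bound above plus a union bound over a $\delta$-net with $\delta=n^{-1}$, and sum the geometric series. The condition for the shells to be unlikely is roughly
\[
 n\varepsilon_n^2\gtrsim b_n\log\mathcal N \asymp A_nK\,\mathrm{polylog}(n).
\]
With $K\asymp n^{D/(D+2\beta)}$ this holds when $\varepsilon_n^2\asymp A_n n^{-2\beta/(D+2\beta)}\log^{2m}n$, and $m\ge9$ leaves enough logarithmic room. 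The resulting Bernstein exponent is $c\,n\varepsilon_n^2/b_n\asymp n^{D/(D+2\beta)}\log^{2m-2}n/A_n$, as in \eqref{eq:score-tail}; the factor $4$ collects the two tails for $\widehat\ms$ and $\widetilde\ms$ across the peeling. The $\ell_{\widetilde\ms}$ term is a single function and is handled by plain Bernstein.

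Finally, \eqref{eq:score-op} follows from Lemma~\ref{lem:grid-factor}, since $A_n\asymp\log K\asymp\log n$ inside the window, and the tail in \eqref{eq:score-tail} vanishes. I expect the main obstacle to be the concentration step. It requires consistent bookkeeping of the $A_n$ and $\sigma_t^{-2}$ weights in the envelope and variance bounds, together with the truncation of the unbounded Gaussian target, so that the logarithmic powers close up exactly with $m=\max\{9,(\beta+1)/2\}$. I would first verify the Bernstein variance condition carefully at the smallest grid time $t_1\approx K^{-C_-}$.
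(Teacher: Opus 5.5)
Your proposal is correct and follows essentially the paper's argument: localization at $\sqrt{\log K}$ radii, the Cauchy--Schwarz variance bound with $c_v\asymp c_b\asymp A_n\log K$, and Bernstein with a union bound over a sup-norm cover inside dyadic risk shells. The paper also truncates the noised inputs $\mx_{t_k}$, not only $\mx_0$, and this is what lets a sup-norm cover on a bounded domain control the loss; otherwise the only structural difference is that it peels directly on $\ell_R(\cdot;\ms)-\ell_R(\cdot;\widetilde\ms)$ rather than treating $\widetilde\ms$ with a separate Bernstein bound. Two bookkeeping corrections: the $\log\kappa\lesssim\log^4K$ term makes the entropy $K\log^{17}K$ rather than $K\log^{13}K\log(1/\delta)$, so the requirement $n\varepsilon_n^2\gtrsim A_nK\log^{18}K$ is exactly what forces $m\ge9$. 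Also, since $A_n$ cancels in $n\varepsilon_n^2/b_n$, your Bernstein exponent is really of order $n^{D/(D+2\beta)}\log^{2m-1}n$, and the stated tail is a weaker consequence of it.
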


\rev{\paragraph{Proof Idea.}
We first bound how accurately a single network can approximate the conditional score over the sampling grid (cf.\ Lemma~\ref{lem:approximation}). We next establish a concentration bound for the complete weighted denoising loss, accounting for the reuse of each training pair across all grid times. Combining these results with empirical risk minimization and the denoising identity \eqref{eq:denoising-identity} bounds the learned score error by the sum of the network approximation error and the finite-sample estimation error. Finally, balancing these errors through the network size in \eqref{eq:K-choice} gives the probability bound \eqref{eq:score-tail}, and the grid-factor bound (cf.\ Lemma~\ref{lem:grid-factor}) yields the rate \eqref{eq:score-op}. For the full proof, see Appendix~\ref{sec:proof-score}.}

\section{From Score Learning to Discrete Sampling}\label{sec:distribution}
We use Theorem~\ref{thm:score} to derive distributional guarantees for the full and early-stopped samplers. We first decompose the KL error into several components, then we choose a sampling grid that balances the discretization error with the number of score evaluations.

\subsection{KL Error Bound Decomposition}\label{sec:grid}
Consider any sampling grid \(0=t_0<t_1<\cdots<t_N=T\), with \(N\ge2\) and \(T\ge1\). We have the following lemma.

\begin{lemma}[KL error bound decomposition]\label{lem:chen-grid}\label{lem:arbitrary-grid}
 Fix the training sample and suppose that the fitted scores
 \(\{\widehat\ms_{t_k}\}_{k=1}^N\) are bounded and measurable. Then  we have the following:

\noindent\textup{(a)}
For \(u\in\{T-t_1,T\}\),
\begin{equation}
	\begin{aligned}
		\mathbb E_{\my}\KLdiv(q_{T-u}\|\widehat p_u)
		&\le
		\underbrace{
			\mathbb E_{\my}\KLdiv\!\left(
			q_T\middle\|\mathcal N(\mathbf0,\mI_{d_x})
			\right)}_{\textrm{initialization error}} +\underbrace{2\rho^2(\widehat\ms, \ms^0)}_{\textrm{score-estimation error}}\\
		&\quad+\underbrace{2\begin{aligned}
				&\sum_{k=1}^N\int_{t_{k-1}}^{t_k}
				\mathbb E\|\ms^0_t(\mx_t,\my)
				-\ms^0_{t_k}(\mx_{t_k},\my)\|_2^2
				\,\mathrm dt
		\end{aligned}}_{\textrm{discretization error}}.
	\end{aligned}
	\label{eq:general-decomposition}
\end{equation}

\medskip
\noindent\textup{(b)} 
In addition, assume the conditional density has the form
\[
q_0(\mx\mid\my)=e^{-C_2\|\mx\|_2^2/2}f(\mx,\my),
\quad C_1\le f(\mx,\my)\le B,
\]
for some constants \(C_1,C_2,B>0\), and let $$M_*:=\max_{k\ge2}\sup_{\mx,\my}\|\widehat\ms_{t_k}(\mx,\my)\|_2.$$ For sufficiently small \(0<t_1\le \bar t_1\),
\begin{equation}
 \mathbb E_{\my}\KLdiv(q_0\|\widehat p_{T-t_1})\le C\mathbb E_{\my}\KLdiv(q_{t_1}\|\widehat p_{T-t_1})
 +C\mathbb E_{\my}\chi^2(q_0\|q_{t_1})+C(1+T)(1+M_*^2)e^{-c/t_1}.
\label{eq:KL-early-general}
\end{equation}
Here the constants \(C,c>0\) and the threshold 
$\bar t_1$ depend only on \(C_1,C_2,B\) and \(d_x\), but not on the interior grid or the training sample.
\end{lemma}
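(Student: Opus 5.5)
For part (a), I will compare path measures with Girsanov's theorem, conditionally on $\my$. Fix $\my$ and a training sample, and let $Q^{\my}$ be the law on $C([0,u];\R^{d_x})$ of the exact reverse process $\mz$ started from $q_T(\cdot\mid\my)$. Let $\widehat P^{\my}$ be the law of the piecewise-frozen interpolation $\widehat\mz$ started from $\cN(\mathbf0,\mI_{d_x})$. Both processes have diffusion coefficient $\sqrt2\,\mI$.

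\emph{Step 1 (data processing and chain rule).} The marginal at reverse time $u$ is a measurable function of the path. Hence $\KLdiv(q_{T-u}(\cdot\mid\my)\|\widehat p_u(\cdot\mid\my))\le\KLdiv(Q^{\my}\|\widehat P^{\my})$. The chain rule for KL splits the right side into two parts. The first is the initial-law term $\KLdiv(q_T\|\cN(\mathbf0,\mI))$. The second is the expected KL between the path laws conditional on the common starting point.

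\emph{Step 2 (Girsanov).} The two drifts differ by $2[\ms^0_{T-v}(\mz_v,\my)-\widehat\ms_{t_k}(\mz_{T-t_k},\my)]$ for $v\in[T-t_k,T-t_{k-1}]$, and the noise is $\sqrt2$. Girsanov's formula therefore gives
\[
\frac14\int_0^u\mathbb E\,\bigl\|2(\ms^0-\widehat\ms)\bigr\|_2^2\dd v=\int_0^u\mathbb E\|\ms^0_{T-v}(\mz_v,\my)-\widehat\ms_{t_k}(\mz_{T-t_k},\my)\|_2^2\dd v .
\]
The fitted scores are bounded, and I will use the standard approximation argument (as in Chen et al.) to justify Girsanov without Novikov for the true score: localize, then pass to the limit by lower semicontinuity of KL. Under $Q^{\my}$ we have $\mz_v=\mx_{T-v}$, so in forward time the integrand is $\mathbb E\|\ms^0_t(\mx_t,\my)-\widehat\ms_{t_k}(\mx_{t_k},\my)\|^2$ for $t\in[t_{k-1},t_k]$. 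For $u=T-t_1$, only the intervals with $k\ge2$ appear, and dropping the $k=1$ term only enlarges the bound.

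\emph{Step 3 (split the integrand).} Write $\ms^0_t(\mx_t)-\widehat\ms_{t_k}(\mx_{t_k})=[\ms^0_t(\mx_t)-\ms^0_{t_k}(\mx_{t_k})]+[\ms^0_{t_k}(\mx_{t_k})-\widehat\ms_{t_k}(\mx_{t_k})]$ and use $\|a+b\|^2\le2\|a\|^2+2\|b\|^2$. The second piece does not depend on $t$, so integrating over $[t_{k-1},t_k]$ gives $h_k\,\mathbb E\|\cdot\|^2$. Averaging over $\my$ then produces $2\rho^2(\widehat\ms,\ms^0)$ from \eqref{eq:rho}, and the first piece gives the discretization term. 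This proves \eqref{eq:general-decomposition}.

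For part (b), I compare $q_0$ and $\widehat p_{T-t_1}$ through $q_{t_1}$ using a weak triangle inequality for KL. Write $p=q_0$, $r=q_{t_1}$ and $w=\widehat p_{T-t_1}$. Then
\[
\KLdiv(p\|w)=\KLdiv(p\|r)+\int p\log(r/w).
\]
The first term satisfies $\KLdiv(p\|r)\le\chi^2(q_0\|q_{t_1})$. For the cross term, $\int p\log(r/w)=\int r\log(r/w)+\int(p-r)\log(r/w)$. The first part is $\KLdiv(r\|w)$. The remainder I will bound by Cauchy--Schwarz:
\[
\Bigl|\int(p-r)\log(r/w)\Bigr|\le\chi^2(p\|r)^{1/2}\Bigl(\int r\log^2(r/w)\Bigr)^{1/2}.
\]
Controlling $\int r\log^2(r/w)$ is the main obstacle. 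It requires pointwise two-sided Gaussian bounds on both $r$ and $w$. For $q_{t_1}$, these follow from Assumption~\ref{ass:density}, since the OU kernel preserves two-sided Gaussian envelopes with constants depending on $C_1,C_2,B,d_x$. For $w$, it is a Gaussian mixture started from $\cN(\mathbf0,\mI)$ and pushed through $N-1$ steps of \eqref{eq:exp-integrator}. Each step has a drift perturbation of size at most $2(e^{h_k}-1)M_*$, so the total shift is at most of order $(e^{T}-1)M_*$ relative to a Gaussian. I will therefore try to show
\[
|\log(r/w)|\lesssim 1+\|\mx\|^2+(1+T)(1+M_*^2),
\]
using the Gaussian noise added at the final step $k=2$ to keep $w$ from degenerating. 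Substituting this into Cauchy--Schwarz and then applying Young's inequality gives $C\chi^2+C(1+T)(1+M_*^2)\times$(remaining factor).

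The bound above only yields a polynomial tail. To get the exponentially small term $e^{-c/t_1}$, I expect to need a sharper treatment of the $p-r$ term. I will split $\R^{d_x}$ into a bulk region $\|\mx\|\le t_1^{-1/2}$ and its complement. On the complement, the Gaussian tails of $p$ and $r$ give mass of order $e^{-c/t_1}$; multiplying this mass by the polynomial bound on $\log^2(r/w)$ gives the stated term. On the bulk, the remaining contribution is absorbed into $C\KLdiv(r\|w)+C\chi^2$. Finally, taking $\mathbb E_{\my}$ gives \eqref{eq:KL-early-general}, with $\bar t_1$ chosen small enough that the Gaussian-envelope constants are uniform.
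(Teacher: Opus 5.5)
Your part (a) works. Path-space Girsanov with localization, plus a limit toward forward time $0$ for $u=T$, is a valid substitute for the chain-rule comparison of Chen et al.\ on $[a,T]$ that the paper uses. One wording slip: for $u=T-t_1$ you mean \emph{adding}, not dropping, the $k=1$ interval.

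The gap is in part (b), at the step you leave unspecified: absorbing the bulk part of $\int(p-r)\log(r/w)$ into $C\,\KLdiv(r\|w)+C\chi^2$ with $C$ depending only on $C_1,C_2,B,d_x$. Cauchy--Schwarz cannot do this. On $B=\{\|\mx\|_2^2\le1/t_1\}$ your own envelope allows $|\log(r/w)|$ of size $(1+T)(1+M_*^2)/t_1$. In general $\int_B r\log^2(r/w)$ is controlled by $\KLdiv(r\|w)$ only with a factor growing with $\sup_B|\log(r/w)|$, so the constant depends on $T$, $M_*$ and $t_1$. In the application $M_*^2\asymp(\log n)/t_1$, and your product bound does not even tend to zero.

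The missing idea is that on $B$ the \emph{target} ratio is controlled, not the sampler. The convolution representation $q_t=G\varphi_{v_t}F_t$, with $C_1\le F_t\le B$ and $|v_t-v_0|\le Ct$, gives $q_0/q_t\le C\exp(Ct\|\mx\|_2^2)$. Hence $q_0/q_{t_1}\le H$ on $B$ for a fixed $H$. Generic two-sided Gaussian envelopes give only $e^{c\|\mx\|_2^2}$ with fixed $c$, which is unbounded on $B$.

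With $P=q_0$, $Q=q_{t_1}$, $R=w$, the paper writes $HQ=P_B+(H-1)S$ with $S$ a probability measure. The resulting mixture identity gives $\KLdiv(P_B\|R)\le H\,\KLdiv(Q\|R)+\frac{H}{H-1}\chi^2(P_B\|Q)$, with no information about $w$ needed. The tail is restored exactly via $\KLdiv(P\|R)=(1-\delta)\KLdiv(P_B\|R)+(1-\delta)\log(1-\delta)+\int_{B^c}p\log(p/w)$. Within your own decomposition, a pointwise bound $g\log(r/w)\le C_H\{w/r-1-\log(w/r)\}+Cg^2$ for $g=p/r-1\in[-1,H-1]$ would do the same job.

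The sampler-density step also needs repair. Once the bulk is handled this way, $w$ enters only through $\int_{B^c}q_0[\log(q_0/w)]_+$. So only a \emph{lower} bound on $w$ is needed; the upper bound implicit in your two-sided estimate is unnecessary and harder to get.

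More seriously, the mechanism you propose gives grid-dependent constants. The last innovation has variance $e^{2h_2}-1\approx2h_2$, which can be arbitrarily small (on the paper's grid it is $c_nt_2$), so any bound extracted from that Gaussian factor carries $1/h_2$. A total drift shift of order $e^TM_*$ relative to a Gaussian is also not by itself a density bound, because the shift is path-dependent and correlated with the injected noise.

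The paper instead compares with the reference process $\mathrm dZ_u=Z_u\,\mathrm du+\sqrt2\,\mathrm dW_u$, $Z_0\sim\cN(\mathbf0,\mI_{d_x})$, whose marginal at $U=T-t_1$ is $\cN(\mathbf0,V_U\mI_{d_x})$. It writes $w/\varphi_{V_U}=\mathbb E[L\mid Z_U=\mx]$, with $L$ the discrete Girsanov density. It computes $\mathbb E[\Delta W_i\mid\mathcal F_{u_i},Z_U=\mx]$ by Gaussian regression; the resulting sum is bounded by $\operatorname{arcosh}(e^U)$ uniformly over partitions. Conditional Jensen then gives $[\log(q_0/w)]_+\le C(1+T)(1+M_*^2)(1+\|\mx\|_2^2)$, free of $N$ and of the last step's variance.
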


In fact, Part~(a) follows from Proposition~C.3 of \citet{chen2023improved}, with \(u=T\) obtained by lower semicontinuity. Part~(b) results from our new bound on the KL error between \(q_{t_1}\) and \(q_0\), which is based on a localized density-ratio argument and a grid-independent lower bound on the sampler density. Note that Lemma~\ref{lem:chen-grid} requires no smoothness of \(f\) as in Assumption~\ref{ass:density}. That said, smoothness is needed when we bound the small-time error, as shown below.

\begin{lemma}[Small-time error]\label{lem:small-time-main}
Suppose Assumption~\ref{ass:density} holds with \(\beta>0\), and set \(r_\beta:=\min\{2,\beta\}\). For sufficiently small \(t_1>0\), we have 
\begin{equation}
 \int_0^{t_1}\mathbb E\|\ms^0_t(\mx_t,\my)-\ms^0_{t_1}(\mx_{t_1},\my)\|_2^2\dd t
 +\mathbb E_{\my}\chi^2(q_0\|q_{t_1})\le Ct_1^{r_\beta}.
 \label{eq:small-time-main}
\end{equation}
\end{lemma}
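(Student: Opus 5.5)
We bound the two terms on the left of \eqref{eq:small-time-main} separately. Both reduce to controlling how far $q_t(\cdot\mid\my)$ and its score move away from $q_0(\cdot\mid\my)$ for small $t$. Throughout, constants are uniform in $\my\in\mathcal Y$, using the two-sided Gaussian envelope $C_1e^{-C_2\|\mx\|^2/2}\le q_0\le Be^{-C_2\|\mx\|^2/2}$ and the H\"older bound on $\bar f$.

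\emph{Step 1 (score regularity).} Write $\ms_t^0(\mx,\my)=\nabla\log q_t(\mx\mid\my)$ through the Tweedie representation
\[
\ms^0_t(\mx,\my)=-\frac{\mx-\mu_t\mathbb E[\mx_0\mid\mx_t=\mx,\my]}{\sigma_t^2}.
\]
Equivalently, $q_t=(\mu_t^{-d_x}q_0(\cdot/\mu_t))*\varphi_{\sigma_t}$. Substituting $q_0=e^{-C_2\|\cdot\|^2/2}f$, the Gaussian factor can be convolved explicitly. This gives $q_t(\mx\mid\my)=g_t(\mx)\,F_t(\mx,\my)$, where $g_t$ is an explicit Gaussian with precision $\approx C_2$ and $F_t$ is a Gaussian smoothing of $f$ at scale $\asymp\sqrt t$. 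Hence
\[
\ms^0_t=\nabla\log g_t+\nabla F_t/F_t .
\]
Since $F_t\ge c>0$, standard H\"older smoothing estimates apply. For $\beta\ge1$, $\|\nabla F_t-\nabla F_{s}\|_\infty\lesssim |t-s|^{\min\{(\beta-1)/2,1\}}$, with the exponent capped by the second-order moment expansion. For $\beta<1$, $\|\nabla F_t\|_\infty\lesssim t^{(\beta-1)/2}$. The term $\nabla\log g_t$ is smooth in $t$ with $\|\partial_t\nabla\log g_t(\mx)\|\lesssim 1+\|\mx\|$.

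\emph{Step 2 (score term).} I would decompose
\[
\|\ms^0_t(\mx_t)-\ms^0_{t_1}(\mx_{t_1})\|\le\|\ms^0_t(\mx_t)-\ms^0_{t_1}(\mx_t)\|+\|\ms^0_{t_1}(\mx_t)-\ms^0_{t_1}(\mx_{t_1})\|.
\]
The first piece is controlled by Step 1. For the second, use the Lipschitz or H\"older modulus of $\ms^0_{t_1}$ in $\mx$ together with $\mathbb E\|\mx_t-\mx_{t_1}\|^2\lesssim t_1$. Integrating over $t\in[0,t_1]$ should give $t_1\cdot t_1^{\min\{\beta,2\}-1}$ or better. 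The crude bound for $\beta<1$ needs care. Rather than integrating the worst case $t^{\beta-1}$, I would use the $L^2(q_t)$ identity
\[
\mathbb E\|\ms^0_t\|^2=\text{Fisher information},
\]
together with the representation $\nabla F_t=\varphi_{\sqrt t}*\nabla$-free form $\int (f(\mx-\mz)-f(\mx))\nabla\varphi(\mz)\,d\mz$. The differences of $f$ then yield $\mathbb E\|\nabla F_t\|^2\lesssim t^{\beta-1}$, and integrating gives $\int_0^{t_1}t^{\beta-1}\,dt\asymp t_1^{\beta}$.

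\emph{Step 3 ($\chi^2$ term).} I would write
\[
\chi^2(q_0\|q_{t_1})=\int (q_0-q_{t_1})^2/q_{t_1}.
\]
The lower envelope of $q_{t_1}$ (Gaussian, inherited from $q_0$) is comparable to that of $q_0$ up to a slightly larger variance. On the tails, $q_0^2/q_{t_1}$ is Gaussian-integrable once $t_1$ is small. It then suffices to bound $|q_0-q_{t_1}|\lesssim t_1^{r_\beta/2}\,e^{-c\|\mx\|^2}$ pointwise, and this is Step 1's smoothing estimate applied to $F$: $|F_t-f|\lesssim t^{\min\{\beta,2\}/2}$, plus an $O(t)$ change from the Gaussian factor. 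Squaring yields $t_1^{r_\beta}$.

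The main obstacle is Step 2 in the rough regime $\beta<1$ (and at $\beta\ge2$ the cap $r_\beta=2$ coming from the second-order expansion). There, pointwise score bounds blow up, so the argument must be run in $L^2(q_t)$, exploiting the positivity lower bound on $F_t$ to pass from $\nabla\log F_t$ to $\nabla F_t$.
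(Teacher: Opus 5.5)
Your proposal is correct. For the $\chi^2$ term and for the score term with $0<\beta\le1$ it is essentially the paper's argument:
\begin{itemize}
\item the factorization $q_t=G\varphi_{v_t}F_t$;
\item the kernel-derivative bound $\|\nabla_{\mx}F_t\|_2\lesssim t^{(\beta-1)/2}$, which gives $\mathbb E\|\ms^0_t\|_2^2\lesssim 1+t^{\beta-1}$ and is then integrated after the crude split $\|a-b\|_2^2\le2\|a\|_2^2+2\|b\|_2^2$;
\item the estimate $|F_t-f|\lesssim t^{r_\beta/2}(1+\|\mx\|_2^2)$, combined with $q_t\ge GC_1\varphi_{v_t}$ and $2v_t>v_0$.
\end{itemize}
Your remark about not integrating the worst case $t^{\beta-1}$ is misleading. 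You do integrate exactly $t^{\beta-1}$, and your Step~1 sup bound already suffices without the Fisher-information detour.

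The genuine difference is the score term for $\beta>1$. The paper does not prove this range. It imports Lemma~17, eq.~(51) of \citet{tian2026} after the normalized extension and rescaling. You derive it directly by splitting into two pieces:
\begin{itemize}
\item a time increment at a fixed point, $\nabla\log F_t-\nabla\log F_{t_1}$;
\item a spatial increment of $\ms^0_{t_1}$, controlled by the OU coupling bound $\mathbb E\|\mx_t-\mx_{t_1}\|_2^2\lesssim t_1-t$.
\end{itemize}
This works. For $1<\beta<2$, the $(\beta-1)$-H\"older modulus of $\nabla\log F_{t_1}$ together with Jensen gives $t_1^{\beta}$. For $\beta\ge2$, the $O(1)$ Lipschitz constant of $\ms^0_{t_1}$ gives $t_1^2$. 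This also shows that in the score term the cap $r_\beta=2$ comes from the Brownian increment, not from a Taylor expansion.

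Your route is self-contained and reuses the same Gaussian-factor machinery as the rest of the proof. The paper's route is shorter but depends on the external lemma's hypotheses transferring under the extension.

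One small correction: because $b_t\ne b_{t_1}$, your estimate on $\nabla F_t-\nabla F_{t_1}$ is not a sup-norm bound. It carries a term of order $|t-t_1|^{\alpha}\|\mx\|_2^{\alpha}$ with $\alpha=\min\{\beta-1,1\}$, and likewise $|F_t-f|$ carries $(b_t-1)\mx$ terms. This is harmless because $\mx_t$ has uniformly bounded Gaussian moments, but these bounds should be stated in $L^2(q_t)$ or with polynomial weights, not in $\|\cdot\|_\infty$.
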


\begin{proof}
See Appendix~\ref{sec:proof-kl}. 
\end{proof}


\paragraph{Geometric--linear sampling scheme.}
From~\citet{chen2023improved}, we know that for arbitrary sampling grid, 
the discretization error from $k=2$ can be bounded by \(Cd_x^2\Pi\), where
\[
 \Pi:=\sum_{k=2}^N\frac{h_k^2}{\sigma_{t_{k-1}}^4}.
\]
For sufficiently large \(n\), choose
\begin{equation}
 t_1:=n^{-\frac{2\beta}{(D+2\beta)r_\beta}},\qquad
 c_n:=c_0n^{-\frac{2\beta}{D+2\beta}},\qquad
 T:=\frac{2\beta}{D+2\beta}\log n,
\label{eq:parameters}
\end{equation}
where \(c_0>0\) is a fixed sufficiently small constant. Starting at \(t_1\), set
\begin{equation}
 t_k=\begin{cases}
 \min\{T,t_{k-1}/(1-c_n)\},&t_{k-1}<1-c_n,\\
 \min\{T,t_{k-1}+c_n\},&t_{k-1}\ge1-c_n,
 \end{cases}
 \label{eq:grid-rule}
\end{equation}
and stop at \(t_N=T\). Except for a shortened last interval, the steps satisfy \(h_k=c_n\min\{t_k,1\}\), giving geometric spacing near zero and linear spacing thereafter.

Following \citet
{chen2023improved}, the sampling schedule in~\eqref{eq:grid-rule} minimizes \(\Pi\) up to constants for fixed endpoints and number of retained steps, which gives  
\begin{equation}
 \Pi\asymp\frac{\{\log(1/t_1)+T\}^2}{N-1}
 \lesssim c_n\left(\log\frac1{t_1}+T\right).
 \label{eq:Pi-order-optimal}
\end{equation}

Now, combining the Lemmas~\ref{lem:chen-grid} and~\ref{lem:small-time-main} with this sampling grid 
yields the following proposition.

\begin{proposition}[KL error bound on the geometric--linear grid]\label{prop:sampling-bound}
Under the setting of Theorem~\ref{thm:score}, let \(C_->2\beta/(Dr_\beta)\) and \(C_+>2\beta/D\). On the sampling grid 
\eqref{eq:grid-rule}, for sufficiently large \(n\), we have 
\begin{equation}
 \max_{v\in\{T-t_1,T\}}\mathbb E_{\my}\KLdiv(q_0\|\widehat p_v)
 \lesssim e^{-T}+\rho^2+t_1^{r_\beta}
       +d_x^2c_n\left(\log\frac1{t_1}+T\right).
\label{eq:schematic-decomposition}
\end{equation}
\end{proposition}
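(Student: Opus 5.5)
The plan is to combine Lemma~\ref{lem:chen-grid}(a) and (b) with Lemma~\ref{lem:small-time-main} and the grid estimate \eqref{eq:Pi-order-optimal}, handling the two output times separately. For $v=T$ (full sampler), Lemma~\ref{lem:chen-grid}(a) with $u=T$ gives directly a bound by the initialization error, $2\rho^2$, and the discretization sum. For $v=T-t_1$ (early stopping), I would first apply Lemma~\ref{lem:chen-grid}(a) with $u=T-t_1$ to control $\mathbb E_{\my}\KLdiv(q_{t_1}\|\widehat p_{T-t_1})$, then feed this into \eqref{eq:KL-early-general}. That inequality adds $C\,\mathbb E_{\my}\chi^2(q_0\|q_{t_1})$, which Lemma~\ref{lem:small-time-main} bounds by $Ct_1^{r_\beta}$, and the tail $C(1+T)(1+M_*^2)e^{-c/t_1}$. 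Since the fitted network lies in $\cF$ with envelope $M_t\lesssim\sqrt{\log K}/\sigma_t$ and $t_k\ge t_1$ for $k\ge2$, we have $M_*^2\lesssim d_x\log K/\sigma_{t_2}^2\lesssim \log n/t_1$. Because $t_1$ is a negative power of $n$, the factor $e^{-c/t_1}$ is super-polynomially small, so $(1+T)(1+M_*^2)e^{-c/t_1}\lesssim t_1^{r_\beta}$ for large $n$. The hypotheses of Lemma~\ref{lem:chen-grid}(b) hold by Assumption~\ref{ass:density}, and boundedness and measurability of $\widehat\ms$ follow from the network class.

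Next I would bound the three terms of \eqref{eq:general-decomposition}. The initialization term is handled by OU contraction: $\KLdiv(q_T\|\cN(\mathbf0,\mI))\le e^{-2T}\KLdiv(q_0\|\cN(\mathbf0,\mI))$, or the weaker $e^{-T}$ form. Here $\KLdiv(q_0(\cdot\mid\my)\|\cN(\mathbf0,\mI))$ is uniformly bounded in $\my$, because $\log q_0$ is bounded above and below by quadratics and the second moments are uniformly bounded. The score term is $2\rho^2$ as stated. For the discretization term I would split the sum into $k=1$ and $k\ge2$. The $k=1$ piece is exactly the integral in \eqref{eq:small-time-main}, hence at most $Ct_1^{r_\beta}$. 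For $k\ge2$, I would invoke the bound $Cd_x^2\Pi$ from \citet{chen2023improved}, which relies only on bounded second moments of $q_0$, together with \eqref{eq:Pi-order-optimal}, giving $\lesssim d_x^2c_n(\log(1/t_1)+T)$.

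It remains to justify that the construction is admissible and to check the constants. Lemma~\ref{lem:small-time-main} needs $t_1$ small, which holds for large $n$. The conditions $C_->2\beta/(Dr_\beta)$ and $C_+>2\beta/D$ ensure that $t_1=n^{-2\beta/((D+2\beta)r_\beta)}\ge K^{-C_-}$ and $T\le C_+\log K$ when $K\asymp n^{D/(D+2\beta)}$, so every grid time lies in the approximation window, as Theorem~\ref{thm:score} requires. This is only a setting check, since $\rho$ enters the proposition unevaluated. I also need $T\ge1$ and $N\ge2$, both immediate for large $n$.

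The main obstacle I expect is the justification of \eqref{eq:Pi-order-optimal} on the specific grid \eqref{eq:grid-rule}, in particular the possibly shortened final interval and the transition at $t=1-c_n$. I would verify directly that $h_k/\sigma_{t_{k-1}}^2\lesssim c_n$ for every $k\ge2$. Indeed, $\sigma_t^2\asymp\min\{t,1\}$, and $h_k\le c_n\min\{t_k,1\}$ with $t_k\le t_{k-1}/(1-c_n)\le 2t_{k-1}$. Then $\Pi\lesssim c_n\sum_{k\ge2}h_k/\sigma_{t_{k-1}}^2\lesssim c_n(\log(1/t_1)+T)$, which bypasses any optimality claim. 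Summing all contributions gives the stated maximum over $v\in\{T-t_1,T\}$.
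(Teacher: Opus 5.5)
Your proposal is correct and follows essentially the same route as the paper. You use Lemma~\ref{lem:chen-grid}(a) for the full sampler, feed the positive-time comparison at $u=T-t_1$ into Lemma~\ref{lem:arbitrary-grid}(b) for early stopping, control the first interval and the $\chi^2$ term by Lemma~\ref{lem:small-time-main}, use $M_*^2\lesssim(\log n)/t_1$ to absorb the exponential remainder, and bound $\Pi$ on the grid. The differences are minor and valid: you bound initialization by Gaussian log-Sobolev contraction rather than Lemma~C.4 of \citet{chen2023improved}; you keep the harmless first-interval term in the early-stopped comparison instead of the sharper retained-step bound; and you bound $\Pi$ via $h_k/\sigma_{t_{k-1}}^2\lesssim c_n$ and $\int_{t_1}^T\sigma_t^{-2}\dd t$ rather than by counting steps (also note $t_1\le\bar t_1$ for large $n$).
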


\noindent\emph{Proof idea.}
The initialization error is bounded by \(Ce^{-T}\). 
The network envelope gives \(M_*^2\lesssim(\log n)/t_1\), so the exponential remainder in Lemma~\ref{lem:arbitrary-grid}(b) is \(o(t_1^{r_\beta})\). Apply Lemmas~\ref{lem:chen-grid} and~\ref{lem:small-time-main}, bound the retained score-estimation error by \(\rho^2\), and insert \eqref{eq:Pi-order-optimal}. Appendix~\ref{sec:proof-sampling-bound} gives the details. The score-estimation theorem then provides \(\rho^2\) on the same training-sample event for both outputs.

\rev{\subsection{Finite-Sample KL Guarantees With and Without Early Stopping}}

\rev{We now combine Theorem~\ref{thm:score} with Proposition~\ref{prop:sampling-bound} on the sampling grid in~\eqref{eq:grid-rule}. 
The following theorem gives the same KL error bound for both outputs from the full sampler and its early stopped version. } 

\rev{\begin{theorem}[Convergence with and without early stopping]\label{thm:kl}
Suppose Assumption~\ref{ass:density} holds with any fixed $\beta>0$. 
Use the sampling grid in~\eqref{eq:grid-rule}.
Under the setting of Theorem~\ref{thm:score} with $C_->2\beta/(Dr_\beta)$ and $C_+>2\beta/D$,  
we have 
\begin{equation}
 N=O\!\left(n^{\frac{2\beta}{D+2\beta}}\log n\right)
 \label{eq:N-rate}
\end{equation}
In addition, for every fixed $M>0$, there exist constants $C,c>0$ such that 
with probability at least
\begin{equation}
 1-4\exp\{-c n^{D/(D+2\beta)}
 \log^{2m-3}n\}-CnK^{-M},
 \label{eq:kl-prob}
\end{equation}
we have 
\begin{equation}
 \max_{u\in\{T-t_1,T\}}\mathbb E_{\my}\KLdiv\!\left(q_0(\cdot\mid\my)
 \,\middle\|\,\widehat p_u(\cdot\mid\my)\right)
 \le Cn^{-\frac{2\beta}{D+2\beta}}\log^{2m+1}n,
 \label{eq:kl-highprob}
\end{equation}
where $2m+1=\max\{19,\beta+2\}$.
\end{theorem}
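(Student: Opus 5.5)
We prove the two claims separately: first count the steps of the grid \eqref{eq:grid-rule}, then feed Theorem~\ref{thm:score} into Proposition~\ref{prop:sampling-bound}, evaluating every term at the parameter choice \eqref{eq:parameters}.

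\emph{Counting steps.} In the geometric phase, each step multiplies $t_k$ by $(1-c_n)^{-1}$. Reaching $1-c_n$ from $t_1$ therefore takes at most
\[
\left\lceil \frac{\log(1/t_1)}{-\log(1-c_n)}\right\rceil \le 1+\frac{\log(1/t_1)}{c_n}
\]
steps. The linear phase from $1-c_n$ to $T$ takes at most $1+T/c_n$ steps. Hence
\[
N\le 3+\frac{\log(1/t_1)+T}{c_n}.
\]
From \eqref{eq:parameters}, $\log(1/t_1)=\frac{2\beta}{(D+2\beta)r_\beta}\log n$ and $T=\frac{2\beta}{D+2\beta}\log n$, so $\log(1/t_1)+T\asymp\log n$. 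Since $c_n^{-1}=c_0^{-1}n^{2\beta/(D+2\beta)}$, this gives \eqref{eq:N-rate}.

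\emph{KL bound.} We need the grid to lie in the approximation window $[K^{-C_-},C_+\log K]$, with $K\asymp n^{D/(D+2\beta)}$. For the left end, $K^{-C_-}\asymp n^{-C_-D/(D+2\beta)}$, and this is below $t_1=n^{-2\beta/((D+2\beta)r_\beta)}$ exactly when $C_->2\beta/(Dr_\beta)$. This holds strictly, so constants are absorbed for large $n$. For the right end, $C_+\log K\asymp C_+\frac{D}{D+2\beta}\log n$, which is at least $T$ for large $n$ because $C_+>2\beta/D$. Lemma~\ref{lem:grid-factor} then gives $A_n\asymp\log K\asymp\log n$, so Theorem~\ref{thm:score} applies.

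Since $A_n\lesssim\log n$, the exponent in \eqref{eq:score-tail} is at least $c\,n^{D/(D+2\beta)}\log^{2m-3}n$, which matches the exponent in \eqref{eq:kl-prob}. On the complementary event,
\[
\rho^2\le\varepsilon_n^2\lesssim n^{-2\beta/(D+2\beta)}\log^{2m+1}n.
\]
On that same event, Proposition~\ref{prop:sampling-bound} holds for both $v=T-t_1$ and $v=T$, since the proposition is deterministic given the training sample. We now bound its four terms:
\begin{itemize}
\item[] $e^{-T}=n^{-2\beta/(D+2\beta)}$;
\item[] $t_1^{r_\beta}=n^{-2\beta/(D+2\beta)}$;
\item[] $d_x^2c_n(\log(1/t_1)+T)\lesssim n^{-2\beta/(D+2\beta)}\log n$;
\item[] $\rho^2\lesssim n^{-2\beta/(D+2\beta)}\log^{2m+1}n$.
\end{itemize}
The $\rho^2$ term dominates and yields \eqref{eq:kl-highprob}. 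Finally, $2m+1=\max\{19,\beta+2\}$ is immediate from $m=\max\{9,(\beta+1)/2\}$.

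\emph{Main obstacle.} The delicate step is making sure the hypotheses of Proposition~\ref{prop:sampling-bound} hold on the good event without any extra probability cost. In particular, we need the bound $M_*^2\lesssim(\log n)/t_1$ used there to control the exponential remainder. This bound must come from the deterministic network envelope $M_t\lesssim\sqrt{\log K}/\sigma_t$ evaluated on grid times $t_k\ge t_2$, and not from any random quantity. Once this is settled, everything else is bookkeeping of exponents and log factors. We would also check that the $\log^{2m-3}$ exponent, obtained from $\log^{2m-2}n/A_n$ with $A_n\asymp\log n$, uses only the upper bound on $A_n$.
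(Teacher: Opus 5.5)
Your proposal is correct and follows essentially the same route as the paper. You verify the approximation window, invoke Theorem~\ref{thm:score} with $A_n\lesssim\log n$ to obtain the event and the $\log^{2m-3}$ exponent, and substitute the calibrated terms into Proposition~\ref{prop:sampling-bound}, which holds deterministically for every fitted network in the class. The only difference is cosmetic: you count the geometric and linear steps directly rather than citing the step-count bound \eqref{eq:N-grid} of Lemma~\ref{lem:grid}, which rests on the same counting.
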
}

\rev{\paragraph{Proof Idea.}
We first obtain a finite-sample bound on the learned score (cf.\ Theorem~\ref{thm:score}). We then use the sampling error decomposition to pass from score accuracy to KL error control relative to the original conditional data distribution, for both the full and early-stopped samplers (cf.\ Proposition~\ref{prop:sampling-bound}). Finally, the grid calibration \eqref{eq:parameters} keeps initialization, small-time, and discretization errors within the statistical error bound while giving the explicit step count. Combining these results proves the common KL error bound for both outputs with the stated probability. For the full proof, see Appendix~\ref{sec:proof-kl-theorem}.}


\begin{remark}[TV error \rev{bound}]
\rev{Applying Pinsker's inequality conditionally on $\my$ and then Jensen's inequality to Theorem~\ref{thm:kl} gives, for both stopping rules,
\[
 \max_{u\in\{T-t_1,T\}}\mathbb E_{\my}\operatorname{TV}
 \!\left(q_0(\cdot\mid\my),\widehat p_u(\cdot\mid\my)\right)
 =O_p\!\left(n^{-\beta/(D+2\beta)}\log^{m+1/2}n\right).
\]}
\rev{Up to logarithmic factors, the rate matches those of \citet{Fu2024UnveilCD} and \citet{tian2026} under similar assumptions. It also matches that in \citet{Tang2025Conditional} when $q_0(\mx|\my)$ has the same H\"older smoothness in $\mx$ and $\my$. Crucially, these statistical guarantees concern
continuous-time conditional diffusion models, whereas our bound applies to the fully discrete sampler.
Thus, our result explicitly accounts for time-discretization errors while retaining the same statistical rate,
for both the early-stopped and full sampler.}
\end{remark}

%
%
%

\input{discussion}

\subsection*{AI Use Statement}

Generative AI tools were used to assist with literature search, language editing, manuscript organization, LaTeX formatting, and the development and checking of mathematical arguments. The authors independently reviewed the cited primary sources and verified every mathematical statement and proof. The authors take full responsibility for the final content of this work, including all text and mathematical claims produced with the aid of generative AI.

\bibliographystyle{apalike}
\bibliography{ref}

\clearpage
\phantomsection
\addcontentsline{toc}{section}{Appendix}
\begin{center}
{\huge\bfseries Appendix\par}
\end{center}
\appendix

\rev{\section{Proof of Theorem~\ref{thm:score}}\label{sec:proof-score}}

\rev{The proof has four parts: score and loss envelopes, localization and covering entropy, a finite-cover concentration inequality, and its application to the fitted score. All constants are uniform in the conditioning variable and the number of grid points; they may depend on the fixed model and network parameters and the prescribed tail exponent.}

\rev{\subsection{Proof of Lemma~\ref{lem:approximation}}}

\rev{We obtain Lemma~\ref{lem:approximation} from Theorem~3.4 of \citet{Fu2024UnveilCD}, which is stated for $\my\in[0,1]^{d_y}$. The compact set $\mathcal Y$ in Assumption~\ref{ass:density} is contained in $[-R_y,R_y]^{d_y}$. The extended factor need not define a normalized density outside $\mathcal Y$. Define its normalizing function by
\[
 Z(\my):=\int_{\R^{d_x}}e^{-C_2\|\mx\|_2^2/2}\bar f(\mx,\my)\dd\mx.
\]
The bounds on $\bar f$ give $C_1(2\pi/C_2)^{d_x/2}\le Z\le B(2\pi/C_2)^{d_x/2}$, and $Z=1$ on $\mathcal Y$. Gaussian integration preserves the required H\"older regularity, so $\bar f/Z$ is positive and H\"older with fixed bounds and defines normalized conditional densities on the containing cube. These densities agree with $q_0$ on $\mathcal Y$. Rescaling this cube to $[0,1]^{d_y}$ and applying Theorem~3.4 of \citet{Fu2024UnveilCD} yields Lemma~\ref{lem:approximation}, with constants also depending on $R_y$. The same construction permits the cited small-time estimate used below.}

\subsection{Basic Score and Tail Bounds}

Localization and concentration rely on deterministic and moment bounds. The Gaussian envelope in Assumption~\ref{ass:density} yields Gaussian tail bounds, while the score exhibits inverse-variance growth.

\begin{lemma}[Gaussian tails and score envelope]\label{lem:tails-score}
Under Assumption~\ref{ass:density}, for every integer $j\ge1$ there are constants $c,C_j>0$ such that, uniformly over $\my$,
\begin{equation}
 \mathbb E[\|\mx_0\|_2^{2j}\ind\{\|\mx_0\|_\infty>R\}\mid\my]
 \le \rev{C_je^{-cR^2}}.
 \label{eq:x0-tail}
\end{equation}
Moreover, for every $t>0$,
\begin{equation}
 \|\nabla\log q_t(\mx\mid\my)\|_2
 \le C\left(\|\mx\|_2+\frac1{\sigma_t}\right).
 \label{eq:score-envelope}
\end{equation}
\end{lemma}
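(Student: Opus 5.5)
For the tail bound \eqref{eq:x0-tail}, I will start from the two-sided Gaussian envelope that follows from Assumption~\ref{ass:density}, namely $q_0(\mx\mid\my)\le Be^{-C_2\|\mx\|_2^2/2}$, which holds uniformly in $\my$. On the event $\{\|\mx\|_\infty>R\}$ we have $\|\mx\|_2\ge\|\mx\|_\infty>R$. Split the Gaussian factor as $e^{-C_2\|\mx\|^2/2}=e^{-C_2\|\mx\|^2/4}\,e^{-C_2\|\mx\|^2/4}$ and bound the second factor by $e^{-C_2R^2/4}$ on this region. This gives
\[
\mathbb E[\|\mx_0\|_2^{2j}\ind\{\|\mx_0\|_\infty>R\}\mid\my]\le Be^{-C_2R^2/4}\int_{\R^{d_x}}\|\mx\|_2^{2j}e^{-C_2\|\mx\|_2^2/4}\dd\mx .
\]
The remaining integral is a finite Gaussian moment $C_j$ depending only on $j,d_x,C_2,B$. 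The bound therefore holds with $c=C_2/4$.

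For the score envelope \eqref{eq:score-envelope}, I will use Tweedie's formula,
\[
\nabla\log q_t(\mx\mid\my)=-\frac{\mx-\mu_t\,\mathbb E[\mx_0\mid\mx_t=\mx,\my]}{\sigma_t^2}.
\]
It then suffices to show that the posterior mean satisfies $\|\mathbb E[\mx_0\mid\mx_t=\mx,\my]\|_2\le C(1+\|\mx\|_2)$ and that the combination behaves correctly across time scales. I would work directly with the posterior density $\pi(\mx_0)\propto f(\mx_0,\my)\exp\{-C_2\|\mx_0\|^2/2-\|\mx-\mu_t\mx_0\|^2/(2\sigma_t^2)\}$. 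Its Gaussian part has precision $\lambda_t:=C_2+\mu_t^2/\sigma_t^2$ and mean $m_t:=\mu_t\mx/(\sigma_t^2\lambda_t)$. The factor $f\in[C_1,B]$ perturbs this Gaussian by a bounded ratio $B/C_1$. Hence $\|\mathbb E_\pi\mx_0-m_t\|\le \mathbb E_\pi\|\mx_0-m_t\|\le (B/C_1)\,\mathbb E_{\cN(m_t,\lambda_t^{-1}I)}\|\mx_0-m_t\|\le C\lambda_t^{-1/2}$.

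Substituting into Tweedie's formula, the Gaussian part contributes $\bigl(\mx-\mu_t^2\mx/(\sigma_t^2\lambda_t)\bigr)/\sigma_t^2=C_2\mx/(\sigma_t^2\lambda_t)=C_2\mx/(C_2\sigma_t^2+\mu_t^2)$. Its norm is at most $C\|\mx\|_2$, because $C_2\sigma_t^2+\mu_t^2\ge\min\{C_2,1\}$. The perturbation contributes $\mu_t C\lambda_t^{-1/2}/\sigma_t^2=C\mu_t/\bigl(\sigma_t\sqrt{C_2\sigma_t^2+\mu_t^2}\bigr)\le C'/\sigma_t$. Adding the two pieces gives \eqref{eq:score-envelope} with a constant depending only on $C_1,C_2,B,d_x$, uniformly in $\my$ and $t>0$.

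The main obstacle I expect is the cancellation in the Gaussian part for small $t$. Each term of $\mx/\sigma_t^2-\mu_t m_t/\sigma_t^2$ is individually of order $\|\mx\|/\sigma_t^2$, and only the exact algebra above shows the difference is $O(\|\mx\|)$. A cruder bound on the posterior mean would lose this. I would therefore keep the Gaussian posterior mean $m_t$ explicitly, rather than bounding $\mathbb E[\mx_0\mid\mx_t]$ by $C(1+\|\mx\|)$ directly, and treat $f$ only as a bounded likelihood-ratio perturbation.
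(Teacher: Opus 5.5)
Your proposal is correct. Your tail bound is exactly the paper's argument: split $e^{-C_2\|\mx\|_2^2/2}$ into two halves, use $\|\mx\|_2\ge\|\mx\|_\infty>R$, and take $c=C_2/4$.

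For the score envelope the routes differ. The paper does not prove \eqref{eq:score-envelope}; it imports it from Lemma~B.9 of \citet{Fu2024UnveilCD} after converting to the OU normalization. You instead give a self-contained proof via Tweedie's formula. You isolate the explicit Gaussian posterior $\cN(m_t,\lambda_t^{-1}\mI_{d_x})$ and treat $f\in[C_1,B]$ as a likelihood-ratio perturbation, so the true posterior density is at most $B/C_1$ times the Gaussian one.

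Your algebra checks out. The Gaussian part equals $C_2\mx/(C_2\sigma_t^2+\mu_t^2)$, whose norm is at most $\max\{1,C_2\}\|\mx\|_2$ because $C_2\sigma_t^2+\mu_t^2$ is a convex combination of $C_2$ and $1$. The perturbation is at most $(B/C_1)\sqrt{d_x}\,\mu_t/\bigl(\sigma_t\sqrt{C_2\sigma_t^2+\mu_t^2}\bigr)\le C/\sigma_t$.

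Your route is more transparent and more general. It gives explicit constants depending only on $C_1,C_2,B,d_x$, uniformly in $t>0$ and $\my$. It uses only the two-sided bound $C_1\le f\le B$, not the H\"older smoothness, in line with the paper's own emphasis in Lemma~\ref{lem:arbitrary-grid}(b) that only the Gaussian-factor bounds are needed there. It also makes visible the cancellation that yields $O(\|\mx\|_2)$ rather than $O(\|\mx\|_2/\sigma_t^2)$, which a cruder bound $\|\mathbb E[\mx_0\mid\mx_t]\|_2\le C(1+\|\mx\|_2)$ would miss.

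The paper's citation is shorter, but it leaves the reader to trust the normalization conversion and the exact form of the external lemma. For the paper's later uses in Lemmas~\ref{lem:aggregate} and~\ref{lem:localization}, your argument supplies the bound directly.
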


\begin{proof}
The H\"older norm bounds $f$ from above. Hence \eqref{eq:density-model} implies, uniformly in $\my$,
\[
 q_0(\mx\mid\my)\le B\exp(-C_2\|\mx\|_2^2/2).
\]
\rev{For $R\ge0$, let $A_R:=\{\mx:\|\mx\|_\infty>R\}$. On this set, $\|\mx\|_2\ge\|\mx\|_\infty>R$, so
\[
 e^{-C_2\|\mx\|_2^2/2}
 =e^{-C_2\|\mx\|_2^2/4}e^{-C_2\|\mx\|_2^2/4}
 \le e^{-C_2R^2/4}e^{-C_2\|\mx\|_2^2/4}.
\]
Consequently,
\[
 \begin{aligned}
 &\mathbb E[\|\mx_0\|_2^{2j}\ind\{\|\mx_0\|_\infty>R\}\mid\my]\\
 &\quad\le B\int_{A_R}\|\mx\|_2^{2j}e^{-C_2\|\mx\|_2^2/2}\dd\mx\\
 &\quad\le Be^{-C_2R^2/4}
 \int_{\R^{d_x}}\|\mx\|_2^{2j}e^{-C_2\|\mx\|_2^2/4}\dd\mx
 =C_je^{-cR^2},
 \end{aligned}
\]
where $c:=C_2/4$ and the integral defining $C_j$ is a finite Gaussian moment. These constants are independent of $R$ and $\my$, proving \eqref{eq:x0-tail} without a polynomial prefactor.}
Lemma~B.9 of \citet{Fu2024UnveilCD}, expressed under the OU normalization \eqref{eq:forward}, gives the score estimate \eqref{eq:score-envelope}.
\end{proof}

For a network $\ms\in\cF$, let
\[
 R_{\ms}(\mx_0,\my)
 :=\sum_{k=1}^Nh_k\,
 \mathbb E_{\mx_{t_k}\mid\mx_0}
 \|\ms_{t_k}(\mx_{t_k},\my)-\ms^0_{t_k}(\mx_{t_k},\my)\|_2^2.
\]
Then $\mathbb E R_{\ms}=\rho^2(\ms,\ms^0)$.

The discrete-time denoising score-matching loss retains all cross-time covariance terms and therefore does not require an independence assumption across discretization times.

\begin{lemma}[Discrete-time loss inequalities]\label{lem:aggregate}
Let
\[
 \Delta_{\ms}(\mx_0,\my)
 :=\ell(\mx_0,\my;\ms)-\ell(\mx_0,\my;\ms^0).
\]
Uniformly over $\ms\in\cF$,
\begin{equation}
 |\Delta_{\ms}|^2
 \le2R_{\ms}\{\ell(\mx_0,\my;\ms)
 +\ell(\mx_0,\my;\ms^0)\},
 \label{eq:aggregate-cs}
\end{equation}
\begin{equation}
 \ell(\mx_0,\my;\ms)\le CA_n\log K,
 \label{eq:loss-network-envelope}
\end{equation}
and
\begin{equation}
 \ell(\mx_0,\my;\ms^0)
 \le CA_n(1+\|\mx_0\|_2^2).
 \label{eq:loss-truth-envelope}
\end{equation}
\end{lemma}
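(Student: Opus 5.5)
The plan is to handle the three inequalities separately, using only the Gaussian form of the transition kernel, the network envelope of Lemma~\ref{lem:approximation}, and the score envelope \eqref{eq:score-envelope}. Throughout, write $\mathbf g_k:=\nabla\log q_{t_k\mid0}(\mx_{t_k}\mid\mx_0)=-(\mx_{t_k}-\mu_{t_k}\mx_0)/\sigma_{t_k}^2$, $\mathbf a_k:=\ms_{t_k}(\mx_{t_k},\my)-\mathbf g_k$ and $\mathbf b_k:=\ms^0_{t_k}(\mx_{t_k},\my)-\mathbf g_k$. All expectations below are conditional on $(\mx_0,\my)$ over the artificial noise.

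\textbf{Inequality \eqref{eq:aggregate-cs}.} This is a deterministic Cauchy--Schwarz argument. Since $\|\mathbf a_k\|^2-\|\mathbf b_k\|^2=\langle\mathbf a_k-\mathbf b_k,\mathbf a_k+\mathbf b_k\rangle$ and $\mathbf a_k-\mathbf b_k=\ms_{t_k}-\ms^0_{t_k}$, we have
\[
\Delta_{\ms}=\sum_k h_k\,\mathbb E\langle\ms_{t_k}-\ms^0_{t_k},\mathbf a_k+\mathbf b_k\rangle .
\]
Apply Cauchy--Schwarz jointly over the weighted sum and the expectation, viewing it as one inner product in $L^2(\sum_k h_k\delta_k\otimes\mathbb P)$. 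This gives
\[
|\Delta_{\ms}|^2\le R_{\ms}\sum_k h_k\,\mathbb E\|\mathbf a_k+\mathbf b_k\|^2 .
\]
Then $\|\mathbf a+\mathbf b\|^2\le2\|\mathbf a\|^2+2\|\mathbf b\|^2$ yields $2R_{\ms}\{\ell(\ms)+\ell(\ms^0)\}$. Because the inner product is taken over the full sum, no independence across grid times is needed.

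\textbf{Inequality \eqref{eq:loss-network-envelope}.} Use $\|\mathbf a_k\|^2\le2\|\ms_{t_k}\|_2^2+2\|\mathbf g_k\|^2$. The output envelope gives $\|\ms_{t_k}\|_2^2\le d_xM_{t_k}^2\lesssim\log K/\sigma_{t_k}^2$. Given $\mx_0$, the vector $\mx_{t_k}-\mu_{t_k}\mx_0$ is $\cN(\mathbf0,\sigma_{t_k}^2\mI)$, so $\mathbb E\|\mathbf g_k\|^2=d_x/\sigma_{t_k}^2$ exactly, independently of $\mx_0$. Summing with weights $h_k$ gives $\ell\le C(\log K+1)A_n\le CA_n\log K$ for $K\ge3$.

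\textbf{Inequality \eqref{eq:loss-truth-envelope}.} Here the envelope \eqref{eq:score-envelope} gives
\[
\|\ms^0_{t_k}(\mx_{t_k},\my)\|_2^2\le C\bigl(\|\mx_{t_k}\|_2^2+\sigma_{t_k}^{-2}\bigr),
\]
with
\[
\mathbb E\|\mx_{t_k}\|_2^2\le2\mu_{t_k}^2\|\mx_0\|_2^2+2d_x\sigma_{t_k}^2\le C(1+\|\mx_0\|_2^2).
\]
Since $\sigma_t^2\le1$, we have $1\le\sigma_{t_k}^{-2}$, so each term is at most $C(1+\|\mx_0\|_2^2)/\sigma_{t_k}^2$. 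Adding the $\mathbf g_k$ contribution and summing gives $CA_n(1+\|\mx_0\|_2^2)$.

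\textbf{Main obstacle.} The only delicate point is the constant bookkeeping. The envelope $M_t$ is stated only on the approximation window, which is where the grid times lie by assumption. The $\ell^\infty$ bound must be converted to $\ell^2$ at the cost of a factor $d_x$. We also need $\log K\ge1$ so that the additive $d_x/\sigma^2$ terms are absorbed. Everything else is direct Gaussian computation.
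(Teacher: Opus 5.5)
Your proposal is correct and follows the paper's proof essentially step for step. You use the difference-of-squares identity with a single Cauchy--Schwarz over the $h_k$-weighted time measure times the transition law, then the network output envelope together with the exact moment $d_x/\sigma_t^2$ of the transition score. The third bound likewise matches: the score envelope plus the conditional second moment of $\mathbf{x}_{t_k}$, with $\sum_k h_k\le A_n$, which is equivalent to your use of $\sigma_{t_k}^{-2}\ge 1$.
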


\begin{proof}
Set
$U_k:=\nabla\log q_{t_k\mid0}(\mx_{t_k}\mid\mx_0)$ and
$S_k^0:=\ms_{t_k}^0(\mx_{t_k},\my)$. Expanding the difference of two squares gives
\[
 \Delta_{\ms}
 =\sum_{k=1}^Nh_k\,
 \mathbb E_{\mx_{t_k}\mid\mx_0}
 \langle\ms_{t_k}-S_k^0,\ms_{t_k}+S_k^0-2U_k\rangle.
\]
Cauchy--Schwarz over the product of the discrete time measure and the conditional transition laws yields
\[
 |\Delta_{\ms}|^2
 \le R_{\ms}
 \sum_{k=1}^Nh_k\mathbb E
 \|\ms_{t_k}+S_k^0-2U_k\|_2^2,
\]
where the second factor is at most
$2\ell(\mx_0,\my;\ms)+2\ell(\mx_0,\my;\ms^0)$, proving \eqref{eq:aggregate-cs}. This argument is applied to the full sum over discretization times and therefore automatically includes all cross-time covariance terms.

From \eqref{eq:transition},
\[
 \mathbb E_{\mx_t\mid\mx_0}
 \|\nabla\log q_{t\mid0}(\mx_t\mid\mx_0)\|_2^2
 =\frac{d_x}{\sigma_t^2}.
\]
Using also $\|\ms_t\|_\infty\lesssim\sqrt{\log K}/\sigma_t$ proves \eqref{eq:loss-network-envelope}. For \eqref{eq:loss-truth-envelope}, use \eqref{eq:score-envelope},
$\mathbb E(\|\mx_t\|_2^2\mid\mx_0)
 =\mu_t^2\|\mx_0\|_2^2+d_x\sigma_t^2$, and
$\sum_kh_k\le A_n$.
\end{proof}

\subsection{Localization and Entropy}

The network-covering result applies on bounded input domains, whereas both $\mx_0$ and the noised variables $\mx_{t_k}$ are unbounded. Both sets of inputs are therefore truncated at logarithmic radii; tail bounds quantify the discarded regions, and a cover of the network class transfers to the loss class.

\rev{For the empirical-process arguments below, let $P$ be the joint law of one training pair and $P_n$ its empirical measure. For an integrable function $g$, write $Pg:=\mathbb E[g(\mx_0,\my)]$ and $P_ng:=n^{-1}\sum_{i=1}^n g(\mx_0^i,\my^i)$. The norm $\|g\|_{L_2(P)}:=(P|g|^2)^{1/2}$ measures population size, while $\|\cdot\|_\infty$ for a function denotes its supremum norm over the specified domain, using the maximum norm for vector-valued outputs. We write $\mathcal N(u,\mathcal G,\|\cdot\|)$ for the covering number at radius $u$, and $H_\infty(u,\mathcal G):=\log\mathcal N(u,\mathcal G,\|\cdot\|_\infty)$ for the sup-norm covering entropy.}

Fix an arbitrary $M>0$. Choose constants $C_0,C_x$ sufficiently large and set
\[
 R_0:=C_0\sqrt{\log K},
 \qquad R_x:=C_x\sqrt{\log K}.
\]
Define the localized loss
\begin{align*}
 \ell_R(\mx_0,\my;\ms)
 :={}&\ind\{\|\mx_0\|_\infty\le R_0\}
 \sum_{k=1}^Nh_k\,
 \mathbb E_{\mx_{t_k}\mid\mx_0}
 \big[\|\ms_{t_k}-\nabla\log q_{t_k\mid0}\|_2^2
 \ind\{\|\mx_{t_k}\|_\infty\le R_x\}\big].
\end{align*}

\begin{lemma}[Localization error]\label{lem:localization}
For every prescribed $M>0$, the constants $C_0,C_x$ can be chosen to satisfy
\begin{equation}
 \mathbb P(\|\mx_0\|_\infty>R_0)\le CK^{-M}.
 \label{eq:localization-prob}
\end{equation}
Moreover, uniformly over $\ms\in\cF$,
\begin{equation}
 \mathbb E|\ell(\mx_0,\my;\ms)-\ell_R(\mx_0,\my;\ms)|
 \le \eta_K,
 \qquad
 \eta_K\le CA_n\log K\,K^{-M},
 \label{eq:localization-mean}
\end{equation}
and the same bound holds with $\ms$ replaced by $\ms^0$. Finally, on
$\{\|\mx_0\|_\infty\le R_0\}$,
\begin{equation}
 0\le \ell(\mx_0,\my;\ms)-\ell_R(\mx_0,\my;\ms)
 \le r_K,
 \qquad
 r_K\le CA_n\log K\,K^{-M},
 \label{eq:localization-pointwise}
\end{equation}
uniformly over $\ms\in\cF$.
\end{lemma}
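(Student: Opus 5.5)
The plan is to derive all three claims from the Gaussian tail bound \eqref{eq:x0-tail}, the Gaussian form of the transition kernel \eqref{eq:transition}, and the loss envelopes of Lemma~\ref{lem:aggregate}.

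\textbf{Probability bound.} Take $j=0$ in the argument behind \eqref{eq:x0-tail}; the same computation applies verbatim without the moment factor. This gives $\mathbb P(\|\mx_0\|_\infty>R_0)\le Ce^{-cR_0^2}=CK^{-cC_0^2}$. Choosing $C_0^2\ge M'/c$ for a suitably enlarged exponent $M'\ge M$ yields \eqref{eq:localization-prob}, and the extra room in $M'$ will absorb polynomial factors later.

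\textbf{Pointwise bound \eqref{eq:localization-pointwise}.} On $\{\|\mx_0\|_\infty\le R_0\}$ the difference $\ell-\ell_R$ equals
\[
\sum_k h_k\,\mathbb E_{\mx_{t_k}\mid\mx_0}\bigl[\|\ms_{t_k}-\nabla\log q_{t_k\mid0}\|_2^2\,\ind\{\|\mx_{t_k}\|_\infty>R_x\}\bigr],
\]
which is nonnegative. By Cauchy--Schwarz, each term is at most $\bigl(\mathbb E\|\ms_{t_k}-U_k\|_2^4\bigr)^{1/2}\,\mathbb P(\|\mx_{t_k}\|_\infty>R_x\mid\mx_0)^{1/2}$.
\begin{itemize}
\item For the fourth moment, use the envelope $\|\ms_{t_k}\|_\infty\lesssim\sqrt{\log K}/\sigma_{t_k}$ together with the Gaussian fourth moment of $U_k=-\sigma_{t_k}^{-1}\boldsymbol\xi$. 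This gives $(\mathbb E\|\ms_{t_k}-U_k\|^4)^{1/2}\lesssim \log K/\sigma_{t_k}^2$.
\item For the probability, write $\mx_{t_k}=\mu_{t_k}\mx_0+\sigma_{t_k}\boldsymbol\xi$. With $\|\mx_0\|_\infty\le R_0$ and $\mu,\sigma\le1$, the event $\|\mx_{t_k}\|_\infty>R_x$ forces $\|\boldsymbol\xi\|_\infty>R_x-R_0$. Its probability is at most $2d_xe^{-(C_x-C_0)^2\log K/2}$.
\end{itemize}
Choosing $C_x\ge C_0+\sqrt{4M}$ and summing with weights $h_k/\sigma_{t_k}^2$ gives $r_K\le CA_n\log K\,K^{-M}$. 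The same argument works for $\ms^0$. There, instead of the network envelope, I use \eqref{eq:score-envelope}: since $\|\mx_{t_k}\|\lesssim R_0+\|\boldsymbol\xi\|$, the fourth moment is $\lesssim(\log K+\sigma_{t_k}^{-2})^2$. This costs at most an extra $\log K$ factor, which the enlarged exponent absorbs; using $\sigma_{t_k}\le 1$ gives the same form.

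\textbf{Mean bound \eqref{eq:localization-mean}.} Split the expectation according to $\{\|\mx_0\|_\infty\le R_0\}$.
\begin{itemize}
\item On the good event, the mean difference is bounded by $r_K$ from the previous step.
\item On the complement, $\ell_R=0$, so the contribution is $\mathbb E[\ell\,\ind\{\|\mx_0\|_\infty>R_0\}]$.
\end{itemize}
For a network $\ms$, \eqref{eq:loss-network-envelope} bounds $\ell$ deterministically by $CA_n\log K$. The complement term is therefore at most $CA_n\log K\cdot CK^{-M}$. For $\ms^0$, \eqref{eq:loss-truth-envelope} together with \eqref{eq:x0-tail} ($j=1$) gives $CA_n\,\mathbb E[(1+\|\mx_0\|^2)\ind\{\cdot\}]\le CA_ne^{-cR_0^2}\le CA_nK^{-M}$.

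\textbf{Main obstacle.} The only delicate point is keeping all bounds uniform in $N$ and in the grid. This uniformity comes from the fact that every per-step bound is proportional to $h_k/\sigma_{t_k}^2$, so summing over the grid produces exactly $A_n$. It also requires tracking the $\ms^0$ fourth moment near small $t_k$, which I handle through the $\sigma_{t_k}^{-2}$ scaling as above.
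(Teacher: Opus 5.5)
Your proposal is correct and follows essentially the same route as the paper: you bound $\mathbb P(\|\mx_0\|_\infty>R_0)$ by the Gaussian tail, reduce $\{\|\mx_{t_k}\|_\infty>R_x\}$ to a tail event for the Gaussian noise, and control the integrand by $(\log K+\|\mathbf Z\|_2^2)/\sigma_{t_k}^2$ via the network and true-score envelopes. You then sum with weights $h_k$ to obtain $A_n$, and use the envelopes of Lemma~\ref{lem:aggregate} off the good event. The only cosmetic difference is that you separate the integrand from the tail probability by Cauchy--Schwarz, whereas the paper integrates the truncated Gaussian moment directly; both give the same per-step factor $C\log K\,K^{-M}$.
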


\rev{\begin{proof}
Choose $C_0$ large enough that \eqref{eq:x0-tail} gives both \eqref{eq:localization-prob} and the required moment-tail bound outside $R_0$. On $\{\|\mx_0\|_\infty\le R_0\}$, write $\mx_t=\mu_t\mx_0+\sigma_t\mathbf Z$, where $\mathbf Z\sim\cN(\mathbf0,\mI_{d_x})$. Since $\mu_t,\sigma_t\le1$, the event $\|\mx_t\|_\infty>R_x$ implies
\[
 \|\mathbf Z\|_\infty>(C_x-C_0)\sqrt{\log K}.
\]
The transition score is exactly $-\mathbf Z/\sigma_t$. The network envelope therefore gives
\[
 \|\ms_t(\mx_t,\my)-\nabla\log q_{t\mid0}(\mx_t\mid\mx_0)\|_2^2
 \le \frac{C}{\sigma_t^2}(\log K+\|\mathbf Z\|_2^2).
\]
The same bound holds for $\ms^0$, using \eqref{eq:score-envelope} and
$\|\mx_t\|_2^2\le C(\log K+\|\mathbf Z\|_2^2)$ on the localized data region. Choose $C_x>C_0$ sufficiently large that Gaussian tail integration gives
\[
 \mathbb E\!\left[(\log K+\|\mathbf Z\|_2^2)
 \ind\{\|\mathbf Z\|_\infty>(C_x-C_0)\sqrt{\log K}\}\right]
 \le C\log K\,K^{-M}.
\]
Multiplying by $h_k/\sigma_{t_k}^2$ and summing proves \eqref{eq:localization-pointwise}, and its true-score analogue, with bound $CA_n\log K\,K^{-M}$ independent of $N$. On $\{\|\mx_0\|_\infty>R_0\}$, the full-loss envelopes in Lemma~\ref{lem:aggregate} and the Gaussian moment-tail bound give the same population order. Combining the two regions proves \eqref{eq:localization-mean} for both the network and the true score.
\end{proof}}

Let
\[
 B_K:=\sqrt{\log K}\sum_{k=1}^N\frac{h_k}{\sigma_{t_k}}.
\]
\rev{For every grid in the approximation range, the endpoint estimate \eqref{eq:A-endpoints} and $0<\sigma_t\le1$ give
\[
 B_K\le\sqrt{\log K}\,A_n=O((\log K)^{3/2}).
\]
This bound is uniform in the number of grid points.}

Once localization is in place, the loss becomes Lipschitz in the network output on a bounded domain. Consequently, the sparse-network covering bound transfers to the excess-loss class for the discrete-time objective.

\rev{\begin{lemma}[Covering entropy of the localized loss class]\label{lem:entropy}
Let
\[
 \cG_R:=\{\ell_R(\cdot;\ms)-\ell_R(\cdot;\widetilde\ms):\ms\in\cF\}.
\]
For $0<u\le1$, its sup-norm covering entropy satisfies
\begin{equation}
 H_\infty(u,\cG_R)
 \le CK\log^{17}K
 +CK\log^9K\log\!\left(1+\frac{CB_K}{u}\right).
 \label{eq:entropy-bound}
\end{equation}
In particular, if $u^{-1}$ is bounded by a fixed power of $K$, then
\begin{equation}
 H_\infty(u,\cG_R)\le CK\log^{17}K.
 \label{eq:entropy-polynomial}
\end{equation}
\end{lemma}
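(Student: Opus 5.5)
The plan is to transfer a sup-norm cover of the network class $\cF$ to the localized loss class $\cG_R$ through a Lipschitz estimate on the bounded domain. Two ingredients are needed.

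\textbf{Step 1: covering $\cF$.} We invoke the standard covering bound for sparse ReLU networks (as used in \citet{Fu2024UnveilCD}, e.g.\ their Lemma on covering numbers). For inputs restricted to the box $\|\mx\|_\infty\le R_x$, $\my\in[-R_y,R_y]^{d_y}$, $t\le C_+\log K$, it gives
\[
 \log\mathcal N(\delta,\cF,\|\cdot\|_\infty)\lesssim SL\log\!\Bigl(\frac{L(W+1)\kappa(R_x\vee R_y\vee C_+\log K)}{\delta}\Bigr).
\]
We insert $S\lesssim K\log^9K$ and $L\lesssim\log^4K$. For the logarithm, we use $\log(L W\kappa R)\lesssim\log^4K$, since $\log\kappa\lesssim\log^4K$ and $W$, $L$, and the radius are polylogarithmic or polynomial in $K$. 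The $\delta$-independent part is then at most $CK\log^{9}K\cdot\log^4K\cdot\log^4K=CK\log^{17}K$, and the $\delta$-dependent part is $CK\log^{13}K\log(1/\delta)$. If the product $SL$ is kept there, this is slightly worse than the stated $K\log^9K\log(1+CB_K/u)$. I would therefore use the sharper form of the covering bound in which the $\log(1/\delta)$ term carries only $S$. If that form is unavailable, I would absorb the extra factor into the first term whenever $u^{-1}$ is polynomial in $K$.

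\textbf{Step 2: Lipschitz transfer.} Fix $\ms,\ms'\in\cF$ with $\sup_{\text{box}}\|\ms_t-\ms'_t\|_\infty\le\delta$ at every grid time. On $\{\|\mx_0\|_\infty\le R_0\}$ and $\{\|\mx_{t_k}\|_\infty\le R_x\}$, we expand the difference of squares:
\[
 \bigl|\|\ms_{t_k}-U_k\|_2^2-\|\ms'_{t_k}-U_k\|_2^2\bigr|
 \le \sqrt{d_x}\,\delta\,\bigl(\|\ms_{t_k}\|_2+\|\ms'_{t_k}\|_2+2\|U_k\|_2\bigr),
\]
where $U_k=-\mathbf Z/\sigma_{t_k}$. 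We use the envelope $M_t\lesssim\sqrt{\log K}/\sigma_t$ and $\mathbb E\|\mathbf Z\|_2\le\sqrt{d_x}$. Taking the conditional expectation and summing with weights $h_k$ then gives
\[
 \|\ell_R(\cdot;\ms)-\ell_R(\cdot;\ms')\|_\infty\le C\delta\sqrt{\log K}\sum_k h_k/\sigma_{t_k}=C\delta B_K.
\]
The fixed term $\ell_R(\cdot;\widetilde\ms)$ cancels in the differences, so a $\delta$-cover of $\cF$ induces a $C\delta B_K$-cover of $\cG_R$. Setting $\delta=u/(CB_K)$ yields \eqref{eq:entropy-bound}, with $\log(1/\delta)$ replaced by $\log(1+CB_K/u)$ after adjusting constants. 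This step does not depend on $N$, because the bound enters only through the weighted sum $B_K$.

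\textbf{Step 3: the polynomial case.} Suppose $u^{-1}\le K^{a}$. Since $B_K=O(\log^{3/2}K)$, we get $\log(1+CB_K/u)\lesssim\log K$, so the second term is at most $CK\log^{10}K\le CK\log^{17}K$. This gives \eqref{eq:entropy-polynomial}.

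\textbf{Main obstacle.} The main difficulty is the bookkeeping in Step~1. One must make sure that the input-domain radius, which includes the time input up to $C_+\log K$, and the weight bound $\kappa$ contribute only the stated polylog powers. One must also check that the network class uses a single time-indexed network, so that one cover serves all grid times simultaneously. Once the cover is uniform in $t$, the transfer in Step~2 is routine.
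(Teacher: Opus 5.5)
Your proposal is correct and follows the paper's proof essentially step for step. It uses the same difference-of-squares Lipschitz transfer on the localized domain giving the $C\delta B_K$ bound, the same sparse-network covering bound (Lemma~D.8 of \citet{Fu2024UnveilCD}, which is indeed the sharper form in which $\log(1/\delta)$ carries only the factor $S$), and the same absorption of the second term when $u^{-1}$ is polynomial in $K$. The one detail the paper makes explicit and you leave implicit is re-centering the network cover so that its centers satisfy the output envelope, at the cost of doubling the radius; your Step~2 relies on this when it takes both $\ms,\ms'\in\cF$.
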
}

\rev{\begin{proof}
Suppose $\|\ms^1-\ms^2\|_\infty\le\delta$ on the localized input domain. The difference-of-squares identity and the transition-score moment imply
\begin{align*}
 |\ell_R(\mx_0,\my;\ms^1)-\ell_R(\mx_0,\my;\ms^2)|
 &\le C\delta\sum_{k=1}^Nh_k
 \left(M_{t_k}+\frac1{\sigma_{t_k}}\right)\\
 &\le CB_K\delta.
\end{align*}
Outside $\{\|\mx_0\|_\infty\le R_0\}$ both localized losses vanish, so this bound holds over the entire training-pair domain. Subtracting the fixed approximant's loss does not change distances. A network cover at radius $\delta:=\min\{1,u/(CB_K)\}$ therefore induces a sup-norm $u$-cover of $\cG_R$, after a harmless adjustment of $C$.

Lemma~D.8 of \citet{Fu2024UnveilCD} states that, on an input domain of radius $R$,
\[
 \log\cN(\delta,\cF,\|\cdot\|_\infty)
 \le CS\left\{
  L\log\kappa+(L+1)\log W
  +\log[L^2(WR+2)]+\log\frac1\delta
  \right\},
\]
where $R=O(\log K)$ after the time coordinate is included. If a covering ball for the larger parameter-bounded network class meets the prescribed output-envelope subclass, select a center from that intersection. Re-centering doubles the radius but does not increase the number of balls; hence the same entropy order holds with centers satisfying the output envelope used above. Substituting
$S\lesssim K\log^9K$, $L\lesssim\log^4K$,
$W\lesssim K\log^7K$, and $\log\kappa\lesssim\log^4K$ gives
\[
 \log\cN(\delta,\cF,\|\cdot\|_\infty)
 \le CK\log^{17}K+CK\log^9K\log\frac1\delta.
\]
Substituting the chosen $\delta$ proves \eqref{eq:entropy-bound}. If $u^{-1}$ is at most a fixed power of $K$, the bound on $B_K$ gives $\log(1+CB_K/u)=O(\log K)$. The second term is then dominated by the first, proving \eqref{eq:entropy-polynomial}. The constants depend on the fixed polynomial exponent, but not on $N$.
\end{proof}}

\rev{\subsection{A Localized Empirical-Risk Inequality}

We now combine the variance and covering estimates in a direct concentration argument. As in the risk-slicing approach of Proposition~1 of \citet{tian2026}, candidates are grouped by their population excess risk. Here bounded localized losses and a sup-norm cover allow scalar Bernstein inequalities and finite union bounds, with explicit constants. No bracketing maximal inequality is needed.

\begin{lemma}[Localized ERM bound by finite covers]\label{lem:localized-erm}
Let $P_n$ be the empirical measure of $n$ i.i.d.\ observations with law $P$. Suppose the full loss has nonnegative excess risk
$\mathcal R_\gamma:=P\{l(\gamma)-l(\gamma^0)\}$ for every $\gamma\in\Gamma$, and let $l_R$ be a localized loss. Fix an approximating element $\widetilde\gamma\in\Gamma$ and a measurable estimator $\widehat\gamma\in\Gamma$. For constants $r,\eta\ge0$, assume that, on an event $\mathcal A_n$,
\begin{equation}
 P_nl_R(\widehat\gamma)
 \le P_nl_R(\widetilde\gamma)+r,
 \label{eq:approx-local-erm}
\end{equation}
that
\begin{equation}
 \sup_{\gamma\in\Gamma\cup\{\gamma^0\}}
 |P\{l(\gamma)-l_R(\gamma)\}|\le\eta,
 \label{eq:population-local-gap}
\end{equation}
and write $\Delta_{R,\gamma}:=l_R(\gamma)-l_R(\gamma^0)$. Assume there are constants $c_v\ge1$ and $c_b>0$ such that, for every $\gamma\in\Gamma$,
\begin{equation}
 P\Delta_{R,\gamma}^2
 \le c_v\mathcal R_\gamma
 \label{eq:generic-variance}
\end{equation}
and the localized excess loss is uniformly bounded:
\begin{equation}
 \|\Delta_{R,\gamma}\|_\infty\le c_b.
 \label{eq:generic-boundedness}
\end{equation}
Let $H_\infty(u)$ denote the sup-norm covering entropy of
$\{l_R(\gamma)-l_R(\widetilde\gamma):\gamma\in\Gamma\}$. The constants $c_v,c_b$ may depend on $n$.

For any $\varepsilon>0$ satisfying
\begin{equation}
 \varepsilon^2\ge
 4\mathcal R_{\widetilde\gamma}+32\eta+8r
 \label{eq:localized-radius-condition}
\end{equation}
and
\begin{equation}
 H_\infty(\varepsilon^2/128)+\log2
 \le\frac{n\varepsilon^2}{80(c_v+c_b)},
 \label{eq:localized-entropy-condition}
\end{equation}
the estimator obeys
\begin{equation}
 \mathbb P\{\mathcal R_{\widehat\gamma}>\varepsilon^2,\mathcal A_n\}
 \le2\exp\!\left\{-\frac{n\varepsilon^2}{80(c_v+c_b)}\right\}.
 \label{eq:localized-erm-tail}
\end{equation}
\end{lemma}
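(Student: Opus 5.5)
The plan is to reduce the event $\{\mathcal R_{\widehat\gamma}>\varepsilon^2\}\cap\mathcal A_n$ to a one-sided deviation of a \emph{single} element of a finite sup-norm cover, apply a scalar Bernstein inequality to that element, and take a union bound. The entropy condition \eqref{eq:localized-entropy-condition} then absorbs the cardinality of the cover.

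\textbf{Step 1: reduction to localized differences.} For $\gamma\in\Gamma$ set $D_\gamma:=l_R(\gamma)-l_R(\widetilde\gamma)=\Delta_{R,\gamma}-\Delta_{R,\widetilde\gamma}$. On $\mathcal A_n$, \eqref{eq:approx-local-erm} gives $P_nD_{\widehat\gamma}\le r$. On the population side, write $P l_R=P l-P(l-l_R)$ and apply \eqref{eq:population-local-gap} to $\gamma$, $\widetilde\gamma$ and $\gamma^0$. This gives
\[
 \mathcal R_\gamma-\mathcal R_{\widetilde\gamma}-2\eta\le PD_\gamma\le \mathcal R_\gamma-\mathcal R_{\widetilde\gamma}+2\eta,
\]
and similarly $|P\Delta_{R,\gamma}-\mathcal R_\gamma|\le2\eta$. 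Now suppose $\mathcal R_\gamma>\varepsilon^2$. Condition \eqref{eq:localized-radius-condition} gives $\mathcal R_{\widetilde\gamma}\le\varepsilon^2/4$, $\eta\le\varepsilon^2/32$ and $r\le\varepsilon^2/8$. Hence $PD_\gamma\ge \mathcal R_\gamma-\mathcal R_{\widetilde\gamma}-\varepsilon^2/16$, which is bounded below by a fixed fraction of $\mathcal R_\gamma$ (at least $\tfrac{11}{16}\varepsilon^2$). Therefore $\widehat\gamma$ being bad forces
\[
 (P-P_n)D_{\widehat\gamma}\ge PD_{\widehat\gamma}-r\gtrsim \mathcal R_{\widehat\gamma}.
\]

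\textbf{Step 2: discretization by the cover.} Take a minimal sup-norm $u$-cover of $\{D_\gamma\}$ with $u=\varepsilon^2/128$. Keep only those balls that meet the bad set $\{\gamma:\mathcal R_\gamma>\varepsilon^2\}$, and re-center each kept ball at some bad $\gamma_j$. This at most doubles the radius to $\varepsilon^2/64$ and does not increase the count, which stays at most $e^{H_\infty(\varepsilon^2/128)}$. If $\widehat\gamma$ is bad, it lies within $\varepsilon^2/64$ of some $D_{\gamma_j}$ in sup norm. Since the sup norm controls both $P$ and $P_n$, this shifts $(P-P_n)D$ by at most $\varepsilon^2/32$. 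That shift is a small fraction of $\varepsilon^2\le\mathcal R_{\gamma_j}$. We conclude that $(P-P_n)D_{\gamma_j}\ge c\,PD_{\gamma_j}$ for an explicit $c$, say $c=1/2$ after the arithmetic.

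\textbf{Step 3: Bernstein and the union bound.} For a fixed bad $\gamma_j$, bound the variance using \eqref{eq:generic-variance}:
\[
 \Var(D_{\gamma_j})\le 2P\Delta_{R,\gamma_j}^2+2P\Delta_{R,\widetilde\gamma}^2\le2c_v(\mathcal R_{\gamma_j}+\mathcal R_{\widetilde\gamma})\le \tfrac52c_v\mathcal R_{\gamma_j}.
\]
Also $\|D_{\gamma_j}\|_\infty\le2c_b$ by \eqref{eq:generic-boundedness}. Apply the one-sided Bernstein inequality to $-D_{\gamma_j}$ with deviation $t\asymp PD_{\gamma_j}\asymp\mathcal R_{\gamma_j}$. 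Because both the variance and $b\,t$ are linear in $\mathcal R_{\gamma_j}$, the exponent is $\gtrsim n\mathcal R_{\gamma_j}/(c_v+c_b)\ge n\varepsilon^2/(c_v+c_b)$. In particular no peeling over risk levels is needed. The constant 80 in the statement should be exactly what makes this exponent at least $2x$, where $x:=n\varepsilon^2/\{80(c_v+c_b)\}$. Summing over the cover then gives
\[
 e^{H_\infty}\,e^{-2x}\le e^{x-\log2}\,e^{-2x}\le 2e^{-x}
\]
by \eqref{eq:localized-entropy-condition}, which yields \eqref{eq:localized-erm-tail}. The extra $\log2$ and factor $2$ give some slack in this step.

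\textbf{Main obstacle.} The main difficulty is constant bookkeeping rather than ideas. One must check that, after the cover-radius shift and the $\eta$, $r$ and $\mathcal R_{\widetilde\gamma}$ corrections, the remaining deviation is still a fixed fraction of $\mathcal R_{\gamma_j}$ and not merely of $\varepsilon^2$. That is the point where Bernstein's variance term $\lesssim c_v\mathcal R_{\gamma_j}$ matches $t^2/t$. If the stated constants $4,32,8,128,80$ turn out too tight in some step, I would first re-examine the variance constant $\tfrac52c_v$ and the Bernstein denominator $2(\sigma^2+bt/3)$.
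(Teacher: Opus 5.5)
Your argument is correct, but it is organized differently from the paper's. The paper peels the bad set into dyadic risk slices $\mathcal S_j=\{2^j\varepsilon^2\le\mathcal R_\gamma<2^{j+1}\varepsilon^2\}$. It builds a re-centered cover inside each slice, so that the variance bound $\tfrac92 c_v2^j\varepsilon^2$ and the required deviation $2^{j-1}\varepsilon^2$ are on a common scale. It then sums $\exp\{H_\infty(\varepsilon^2/128)-2^{j+1}x\}$ over $j$, with $x$ as in your Step 3.

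You instead use one re-centered cover of the whole bad set and let each representative's own risk $\mathcal R_{\gamma_j}>\varepsilon^2$ serve as the scale. This is legitimate for two reasons. The cover radius is fixed at $\varepsilon^2/128$, so the perturbation is small relative to $\mathcal R_{\gamma_j}$. And under $P\Delta_{R,\gamma}^2\le c_v\mathcal R_\gamma$, both the variance and the linear Bernstein term are proportional to $\mathcal R_{\gamma_j}$, so the exponent increases with the representative's risk and is smallest at $\varepsilon^2$. Your route removes the countable union and the geometric series. Peeling would only gain something if the covering radius grew with the slice, i.e. using $H_\infty(2^j\varepsilon^2/128)$ on $\mathcal S_j$. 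The paper does not exploit that, so here your version is simply leaner.

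One bookkeeping point. With your placeholder $c=1/2$, the Bernstein exponent is only about $n\varepsilon^2/\{42(c_v+c_b)\}$, slightly below $2x$. Compare the empirical terms directly instead: $P_nD_{\gamma_j}\le r+\varepsilon^2/64$ gives $(P-P_n)D_{\gamma_j}\ge PD_{\gamma_j}-\tfrac9{64}\varepsilon^2\ge\tfrac{35}{44}PD_{\gamma_j}$. The exponent then exceeds $4x$, so the stated constants hold with room to spare.
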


\begin{proof}
We track separately the deterministic risk gap, finite-cover approximation, and stochastic deviation.

\textit{Step 1: a deviation is necessary on each risk range.}
For each integer $j\ge0$, set $s_j:=2^j\varepsilon^2$ and define the risk range
\[
 \mathcal S_j:=\{\gamma\in\Gamma:
 s_j\le\mathcal R_\gamma<2s_j\}.
\]
Write $G_\gamma:=l_R(\gamma)-l_R(\widetilde\gamma)$. The population localization error in \eqref{eq:population-local-gap} gives
\[
 PG_\gamma\ge\mathcal R_\gamma-\mathcal R_{\widetilde\gamma}-2\eta.
\]
Condition \eqref{eq:localized-radius-condition} implies
$\mathcal R_{\widetilde\gamma}\le\varepsilon^2/4$,
$2\eta\le\varepsilon^2/16$, and $r\le\varepsilon^2/8$.
Thus every $\gamma\in\mathcal S_j$ satisfies
\[
 PG_\gamma-r\ge s_j-\frac7{16}\varepsilon^2
 \ge\frac9{16}s_j.
\]
On $\mathcal A_n$, empirical optimality \eqref{eq:approx-local-erm} gives
$P_nG_{\widehat\gamma}\le r$. Consequently,
$\{\widehat\gamma\in\mathcal S_j,\mathcal A_n\}$ implies
$(P-P_n)G_{\widehat\gamma}\ge9s_j/16$.

\textit{Step 2: reduce to a finite cover with centers in the same risk range.}
Take a deterministic sup-norm cover of the full loss-difference class with radius $\varepsilon^2/128$. For each covering ball that meets $\{G_\gamma:\gamma\in\mathcal S_j\}$, select a representative from that intersection. The resulting finite set $\mathcal T_j$ covers this restricted class with radius $\varepsilon^2/64$ and satisfies
\[
 |\mathcal T_j|\le\exp\{H_\infty(\varepsilon^2/128)\}.
\]
Re-centering in this way ensures that every representative has risk below $2s_j$, as needed for the variance bound. Since
$|(P-P_n)(G-H)|\le2\|G-H\|_\infty$, the event from Step 1 implies that some $H\in\mathcal T_j$ satisfies
\[
 (P-P_n)H\ge\frac9{16}s_j-\frac{\varepsilon^2}{32}
 \ge\frac12s_j.
\]

\textit{Step 3: apply Bernstein's inequality with explicit constants.}
For a representative $H=G_\gamma$ with $\gamma\in\mathcal S_j$, the variance condition yields
\[
 \begin{aligned}
 \Var(H)&\le PH^2
 \le2P\Delta_{R,\gamma}^2+2P\Delta_{R,\widetilde\gamma}^2\\
 &\le2c_v(2s_j+\varepsilon^2/4)
 \le\frac92c_vs_j.
 \end{aligned}
\]
The envelope condition gives $\|H-PH\|_\infty\le4c_b$.
The one-sided scalar Bernstein inequality for the original i.i.d.\ observations therefore gives
\[
 \begin{aligned}
 \mathbb P\{(P-P_n)H\ge s_j/2\}
 &\le\exp\!\left\{
 -\frac{n(s_j/2)^2}
 {2\{(9/2)c_vs_j+4c_b(s_j/2)/3\}}\right\}\\
 &=\exp\!\left\{-\frac{ns_j}{36c_v+(16/3)c_b}\right\}
 \le\exp\!\left\{-\frac{ns_j}{40(c_v+c_b)}\right\}.
 \end{aligned}
\]
This probability is not conditioned on $\mathcal A_n$: the localization event is used only for the deterministic implication in Step 1, so the observations remain i.i.d.\ in the concentration calculation.

\textit{Step 4: sum over representatives and risk ranges.}
Set $\Lambda:=n\varepsilon^2/[80(c_v+c_b)]$. By \eqref{eq:localized-entropy-condition}, the covering entropy is at most $\Lambda$ and $\Lambda\ge\log2$. Finite union bounds and then a countable union bound give
\[
 \begin{aligned}
 \mathbb P\{\mathcal R_{\widehat\gamma}>\varepsilon^2,\mathcal A_n\}
 &\le\sum_{j\ge0}
   \exp\{H_\infty(\varepsilon^2/128)-2^{j+1}\Lambda\}\\
 &\le\sum_{j\ge0}e^{-2^j\Lambda}
 \le\frac{e^{-\Lambda}}{1-e^{-\Lambda}}
 \le2e^{-\Lambda}.
 \end{aligned}
\]
The penultimate inequality uses $2^j\ge j+1$. All numerical constants are explicit and remain valid when $c_v$ and $c_b$ grow with $n$, proving \eqref{eq:localized-erm-tail}.
\end{proof}}

\rev{For the present model, the generic excess risk is $\mathcal R_{\ms}=\rho^2(\ms,\ms^0)$ by the denoising identity. Define the localized excess loss by}
\[
 \Delta_{R,\ms}(\mx_0,\my)
 :=\ell_R(\mx_0,\my;\ms)-\ell_R(\mx_0,\my;\ms^0).
\]
\rev{The discrete-time Cauchy--Schwarz argument of Lemma~\ref{lem:aggregate}, now with the localization indicators included, gives}
\begin{equation}
 \mathbb E\Delta_{R,\ms}^2
 \le CA_n\log K\,\rho^2(\ms,\ms^0).
 \label{eq:variance-condition}
\end{equation}
\rev{Indeed, the localized analogue of $R_{\ms}(\mx_0,\my)$ is bounded above by $R_{\ms}(\mx_0,\my)$, while both localized losses are $O(A_n\log K)$. Their support restricts $\|\mx_0\|_\infty$ to $R_0=O(\sqrt{\log K})$, making the true-score loss bound in \eqref{eq:loss-truth-envelope} uniform there. Thus the variance condition holds with}
\begin{equation}
 c_v\asymp A_n\log K.
 \label{eq:cv}
\end{equation}
\rev{Both localized losses are nonnegative and bounded by $CA_n\log K$. Their difference therefore satisfies \eqref{eq:generic-boundedness}, with}
\begin{equation*}
 c_b\asymp A_n\log K,
\end{equation*}
\rev{after choosing fixed multiplicative constants. Since $A_n\ge1/2$, these choices give $c_v\ge1$ for large $K$ and $c_v+c_b\asymp A_n\log K$. By \eqref{eq:A-endpoints}, the latter is $O(\log^2K)$ for every admissible grid. The concentration bound keeps this dependence explicit; it does not treat these quantities as numerical constants.}

Let
\[
 \mathcal A_n:=\bigcap_{i=1}^n
 \{\|\mx_0^i\|_\infty\le R_0\}.
\]
By \eqref{eq:localization-pointwise} and the empirical optimality of $\widehat\ms$ for the full loss,
\begin{align*}
 P_n\ell_R(\widehat\ms)
 &\le P_n\ell(\widehat\ms)
 \le P_n\ell(\widetilde\ms)
 \le P_n\ell_R(\widetilde\ms)+r_K
\end{align*}
on $\mathcal A_n$. Furthermore,
\begin{equation}
 \mathbb P(\mathcal A_n^c)\le CnK^{-M}
 \label{eq:training-local-prob}
\end{equation}
by \eqref{eq:localization-prob} and a union bound. \rev{These observations verify the structural conditions of Lemma~\ref{lem:localized-erm} with $r:=r_K$ and $\eta:=\eta_K$. Its radius and covering conditions are checked next, before the localization failure probability is added.}

\subsection{Proof of Lemma~\ref{lem:grid-factor}}
\begin{proof}
	We first establish the lower bounds.
	Since \(\sigma_{t_k}^2\le1\),
	\[
	A_n\ge\sum_{k=1}^N h_k=T.
	\]
	Moreover, \(h_1=t_1\) and
	\(1-e^{-2t_1}\le2t_1\), so the first interval gives
	\[
	A_n\ge\frac{t_1}{1-e^{-2t_1}}\ge\frac12.
	\]
	Thus \(A_n\ge\max\{T,1/2\}\).
	
	For the upper bound, the function
	\(g(t):=(1-e^{-2t})^{-1}\) is decreasing on
	\((0,\infty)\). Therefore, for each \(k\ge2\),
	\[
	h_k g(t_k)
	\le\int_{t_{k-1}}^{t_k}g(t)\,\mathrm{d}t.
	\]
	Separating the first interval and summing yields
	\[
	\begin{aligned}
		A_n
		&\le\frac{t_1}{1-e^{-2t_1}}
		+\int_{t_1}^{T}\frac{\mathrm{d}t}{1-e^{-2t}}\\
		&=\frac{t_1}{1-e^{-2t_1}}
		+\frac12\log\frac{e^{2T}-1}{e^{2t_1}-1}.
	\end{aligned}
	\]
	The first term satisfies
	\[
	\frac{t_1}{1-e^{-2t_1}}
	=t_1+\frac{t_1}{e^{2t_1}-1}
	\le t_1+\frac12.
	\]
	Rewriting the logarithmic term consequently gives
	\[
	\begin{aligned}
		A_n
		&\le T+\frac12
		+\frac12\log
		\frac{1-e^{-2T}}{1-e^{-2t_1}}\\
		&\le T+\frac12
		+\frac12\log\frac{1}{1-e^{-2t_1}}.
	\end{aligned}
	\]
	By concavity on \([0,1]\) and monotonicity thereafter,
	\[
	1-e^{-2t_1}
	\ge(1-e^{-2})\min\{t_1,1\}.
	\]
	Hence
	\[
	\begin{aligned}
		A_n
		&\le T+\frac12
		+\frac12\log\frac{1}{1-e^{-2}}
		+\frac12\log^+(1/t_1)\\
		&\le C\bigl[1+T+\log^+(1/t_1)\bigr],
	\end{aligned}
	\]
	which proves \eqref{eq:A-endpoints}.
	
	Finally, the assumptions on the endpoints imply
	\[
	A_n\ge T\ge c_T\log K
	\]
	and
	\[
	\begin{aligned}
		A_n
		&\le C\bigl[1+(C_++C_-)\log K\bigr]\\
		&\le C'\log K,
	\end{aligned}
	\]
	where the last inequality uses \(K\ge e\).
	All estimates depend only on the endpoints and
	fixed constants, not on \(N\) or the interior grid
	points. This proves \(A_n\asymp\log K\).
\end{proof}

\subsection{Proof of Theorem~\ref{thm:score}}

\rev{\begin{proof}[Proof of Theorem~\ref{thm:score}]
The measurable minimizer in \eqref{eq:erm} is well defined. For each fixed $n$, the bounded-parameter, bounded-width, bounded-depth network class with at most $S$ nonzero parameters has a compact parameterization; the output-envelope restrictions are closed. The integrated loss is continuous in these parameters by dominated convergence, using the prescribed output bound and the Gaussian transition-score moments, and is measurable in the observations. Compactness and the measurable minimum theorem therefore give a measurable empirical minimizer.

We verify the approximation, localization, and finite-cover conditions separately. By Lemma~\ref{lem:approximation},
\begin{align*}
 \rho^2(\widetilde\ms,\ms^0)
 &\le C A_n K^{-2\beta/D}\log^{\beta+1}K,
\end{align*}
so the approximation radius is
\begin{equation}
 \delta_K
 :=C A_n^{1/2}K^{-\beta/D}\log^{(\beta+1)/2}K.
 \label{eq:deltaK}
\end{equation}

The covering condition requires $n\varepsilon^2$ to dominate $(c_v+c_b)$ times the logarithm of the cover size. Accordingly, define the stochastic scale by
\begin{equation}
 \zeta_K:=C\left(\frac{A_nK}{n}\right)^{1/2}\log^9K.
 \label{eq:zetaK}
\end{equation}
Choose the network size as in \eqref{eq:K-choice}, which balances the polynomial factors in $\delta_K$ and $\zeta_K$. To verify the cover condition for any $\varepsilon\ge C(\delta_K+\zeta_K)$, use monotonicity of covering numbers and set $u:=\min\{\varepsilon^2/128,1\}$. The bound $A_n\ge1/2$ gives
\[
 \varepsilon^2\gtrsim \frac{K}{n}\log^{18}K
 \asymp K^{-2\beta/D}\log^{18}K.
\]
Thus $u^{-1}$ is bounded by a fixed power of $K$. Lemma~\ref{lem:entropy} and the bound $B_K=O((\log K)^{3/2})$ imply
\[
 H_\infty(\varepsilon^2/128,\cG_R)
 \le H_\infty(u,\cG_R)
 \le CK\log^{17}K.
\]
On the other hand, $c_v+c_b\le CA_n\log K$ and the stochastic scale give
\[
 \frac{n\varepsilon^2}{c_v+c_b}
 \gtrsim \frac{n\varepsilon^2}{A_n\log K}.
\]
Taking the multiplicative constant in $\varepsilon\ge C(\delta_K+\zeta_K)$ sufficiently large makes this quantity exceed $80\{H_\infty(\varepsilon^2/128,\cG_R)+\log2\}$, because
$n\zeta_K^2/(A_n\log K)\asymp K\log^{17}K$.
This proves \eqref{eq:localized-entropy-condition}, with constants independent of $n$ and $N$.

For a prescribed tail exponent $M$, apply Lemma~\ref{lem:localization} with an internal exponent
$M_{\mathrm{loc}}>\max\{M,2\beta/D\}$.
Then $\eta_K+r_K\le CA_n\log K\,K^{-M_{\mathrm{loc}}}
=o(\delta_K^2+\zeta_K^2)$, and
$nK^{-M_{\mathrm{loc}}}\le nK^{-M}$.
The approximation bound and a further fixed enlargement of $\varepsilon$ verify \eqref{eq:localized-radius-condition}.

Both conditions therefore hold at the threshold in Theorem~\ref{thm:score},
\[
 \varepsilon_n=C A_n^{1/2}n^{-\beta/(D+2\beta)}\log^m n,
 \qquad m=\max\left\{9,\frac{\beta+1}{2}\right\},
\]
for sufficiently large fixed $C$. Adding the localization failure probability to \eqref{eq:localized-erm-tail} yields the stronger intermediate estimate
\[
 \mathbb P\{\rho(\widehat\ms,\ms^0)>\varepsilon_n\}
 \le2\exp\!\left\{-c\frac{n\varepsilon_n^2}{A_n\log K}\right\}
       +CnK^{-M}
 \le2\exp\{-c n^{D/(D+2\beta)}\log^{2m-1}n\}
       +CnK^{-M}.
\]
The tail stated in \eqref{eq:score-tail} is a convenient weaker consequence: since $A_n\ge1/2$ and $\log K\asymp\log n$, for sufficiently large $n$ the last exponential term is at most
\[
 4\exp\!\left\{
 -c\,\frac{n^{D/(D+2\beta)}\log^{2m-2}n}{A_n}\right\}.
\]
This establishes \eqref{eq:score-tail}, with the dependence on the growing variance and envelope constants made explicit.

Under the permitted approximation window, \eqref{eq:A-endpoints} and \eqref{eq:K-choice} give $A_n=O(\log K)=O(\log n)$. Substituting this bound into the score-error threshold yields the rate in \eqref{eq:score-op}, with logarithmic exponent $m+1/2$. The exponential remainder in \eqref{eq:score-tail} tends to zero, and choosing $M>(D+2\beta)/D$ also gives $nK^{-M}\to0$. Thus \eqref{eq:score-op} holds for every grid satisfying the theorem's approximation-window condition, without an additional assumption on $A_n$.
\end{proof}}

\rev{\section{Proofs of Lemmas~\ref{lem:chen-grid} and~\ref{lem:small-time-main}, Proposition~\ref{prop:sampling-bound}, and Theorem~\ref{thm:kl}}\label{sec:proof-kl}}

\rev{We first establish the reverse-process comparison, the small-time and remaining-step estimates, and the grid bounds. We then prove Lemma~\ref{lem:arbitrary-grid}(b), including the grid-independent transfer from the smoothed law to the original target for early stopping, and specialize it in Proposition~\ref{prop:sampling-bound}. The proof of Theorem~\ref{thm:kl} follows by inserting the score-estimation bound. Throughout this section, the training sample is fixed except when the final training-sample probability is invoked.}

\subsection{Proof of Lemma~\ref{lem:chen-grid}(a)}

Throughout this proof, \(u\) denotes reverse time and \(a=T-u\) denotes a forward-time cutoff. For use only in the appendix, write
\[
\begin{aligned}
 I_T&:=\mathbb E_{\my}\KLdiv\!\left(q_T\middle\|\mathcal N(\mathbf0,\mI_{d_x})\right),\\
 e_k&:=\mathbb E\|\widehat\ms_{t_k}(\mx_{t_k},\my)-\ms^0_{t_k}(\mx_{t_k},\my)\|_2^2,\\
 d_k(t)&:=\mathbb E\|\ms^0_t(\mx_t,\my)-\ms^0_{t_k}(\mx_{t_k},\my)\|_2^2,\\
 E(a)&:=\sum_{k:t_k>a}\bigl(t_k-\max\{t_{k-1},a\}\bigr)e_k,\\
 D(a)&:=\sum_{k:t_k>a}\int_{\max\{t_{k-1},a\}}^{t_k}d_k(t)\dd t.
\end{aligned}
\]
These expectations are under the true forward OU coupling and the marginal law of \(\my\), with the training sample fixed. In particular,
\[
 E(0)=\rho^2(\widehat\ms,\ms^0),\qquad
 E(t_1)=:\rho_{\mathrm{es}}^2\le\rho^2(\widehat\ms,\ms^0),\qquad
 D(0)=\int_0^{t_1}d_1(t)\dd t+D(t_1).
\]
All integrals starting at zero are over positive times. The comparison is understood in the extended-real sense; it does not require a separate moment or smoothness assumption on \(q_0\).

\begin{proof}
\textit{Step 1: finite initialization KL and positive-time comparison.}
If \(I_T=\infty\), the asserted inequality is immediate. Suppose therefore that \(I_T<\infty\). The variational inequality for relative entropy, applied conditionally to truncated versions of \(\|\mx\|_2^2/4\) and then averaged over \(\my\), gives
\[
 \mathbb E\|\mx_T\|_2^2
 \le4I_T+2d_x\log2<\infty.
\]
Here the Gaussian exponential moment is
\(\mathbb E_{Z\sim\mathcal N(\mathbf0,\mI_{d_x})}
 e^{\|Z\|_2^2/4}=2^{d_x/2}\).
The forward coupling \(\mx_T=e^{-T}\mx_0+\sigma_T Z\) consequently implies
\[
 \mathbb E\|\mx_0\|_2^2
 \le2e^{2T}\bigl(\mathbb E\|\mx_T\|_2^2+d_x\sigma_T^2\bigr)<\infty.
\]
Thus the conditional second moment is finite for almost every \(\my\). This is a consequence of finite initialization KL, not an additional assumption.

Fix such a \(\my\) and a positive forward cutoff \(a\). At positive times the OU densities are smooth and strictly positive. Apply the conditional transition comparison and KL chain rule from Proposition~C.3 of \citet{chen2023improved} on forward times \([a,T]\). Their OU process has drift \(-\mx/2\) and unit diffusion. Under the time change \(s=2t\), the drift difference in our normalization is twice the score difference and the diffusion coefficient is \(\sqrt2\); the integrated squared score discrepancy therefore has coefficient one. Averaging over \(\my\) yields
\begin{equation}
\begin{aligned}
 &\mathbb E_{\my}\KLdiv(q_a\|\widehat p_{T-a})\\
 &\quad\le I_T+
 \sum_{k:t_k>a}\int_{\max\{t_{k-1},a\}}^{t_k}
 \mathbb E\|\ms^0_t(\mx_t,\my)-\widehat\ms_{t_k}(\mx_{t_k},\my)\|_2^2\dd t\\
 &\quad\le I_T+2E(a)+2D(a).
\end{aligned}
\label{eq:KL-decomp}
\end{equation}
The last inequality inserts and subtracts
\(\ms^0_{t_k}(\mx_{t_k},\my)\) and uses
\(\|v+w\|_2^2\le2\|v\|_2^2+2\|w\|_2^2\).
For \(0<a<t_1\), the law \(\widehat p_{T-a}\) in this proof is obtained by running the last frozen-score step for time \(t_1-a\); this auxiliary cutoff is used only to pass to the full-sampler endpoint.

\textit{Step 2: the two output laws.}
At \(a=t_1\), equivalently \(u=T-t_1\), equation~\eqref{eq:KL-decomp} gives the sharper retained-step comparison
\begin{equation}
 \mathbb E_{\my}\KLdiv(q_{t_1}\|\widehat p_{T-t_1})
 \le I_T+2\rho_{\mathrm{es}}^2+2D(t_1)
 \le I_T+2\rho^2(\widehat\ms,\ms^0)+2D(0).
 \label{eq:retained-KL-comparison}
\end{equation}
This proves \eqref{eq:general-decomposition} for the early-stopped output. Its common right-hand side includes the omitted first interval, which only enlarges the bound.

For the full output, let \(a\downarrow0\). Continuity of the forward OU paths gives \(q_a(\cdot\mid\my)\Rightarrow q_0(\cdot\mid\my)\). The explicit frozen-score update similarly gives \(\widehat p_{T-a}(\cdot\mid\my)\Rightarrow\widehat p_T(\cdot\mid\my)\). Joint lower semicontinuity of relative entropy and Fatou's lemma imply
\[
\begin{aligned}
 \mathbb E_{\my}\KLdiv(q_0\|\widehat p_T)
 &\le\liminf_{a\downarrow0}
       \mathbb E_{\my}\KLdiv(q_a\|\widehat p_{T-a})\\
 &\le I_T+2\rho^2(\widehat\ms,\ms^0)+2D(0).
\end{aligned}
\]
The second inequality follows from \(E(a)\le E(0)\) and \(D(a)\le D(0)\). If \(D(0)=\infty\), the upper bound is interpreted accordingly. This proves the \(u=T\) case. No score or finite Fisher information at time zero, Gaussian-factor condition, or H\"older regularity has been used.
\end{proof}

\paragraph{Bounding initialization under the density model.}
The initialization term remains exact in Lemma~\ref{lem:chen-grid}(a). For the later rate calculation, Lemma~C.4 of \citet{chen2023improved} gives, for \(T\ge1\),
\begin{equation}
 I_T\le C(d_x+\mathfrak m^2)e^{-T},
 \qquad \mathfrak m^2:=\mathbb E\|\mx_0\|_2^2<\infty.
 \label{eq:init-bound}
\end{equation}
This follows by applying their estimate conditionally and averaging; a uniform-in-\(\my\) moment bound is unnecessary. The time rescaling only strengthens the exponential decay. Under the Gaussian-factor bounds in Lemma~\ref{lem:chen-grid}(b), and hence under Assumption~\ref{ass:density}, \(\mathfrak m^2\) is bounded by a constant depending only on \(B,C_2,d_x\), so \(I_T\lesssim e^{-T}\). We use this simplified estimate only under those additional assumptions.

\subsection{Proof of Lemma~\ref{lem:small-time-main} and Positive-Time Discretization Bounds}\label{sec:small-time-errors}

We prove both terms in Lemma~\ref{lem:small-time-main} under Assumption~\ref{ass:density}: first the score-increment estimate, then the chi-squared smoothing estimate. Both hold for every fixed $\beta>0$.

\rev{We separate the interval adjacent to the data from the positive-time discretization error. For $0<\beta\le1$, a bound on the time integral of the mean squared noisy score suffices; for $\beta>1$, the increment estimate of \citet{tian2026} gives the sharper smoothness-dependent power.}

\rev{\begin{lemma}[Small-time interval]\label{lem:first-interval}
Suppose Assumption~\ref{ass:density} holds with any fixed $\beta>0$. Then, uniformly over $\my$ and sufficiently small $t_1$,
\begin{equation*}
 \int_0^{t_1}
 \mathbb E
 \|\nabla\log q_t(\mx_t\mid\my)
 -\nabla\log q_{t_1}(\mx_{t_1}\mid\my)\|_2^2\dd t
 \le Ct_1^{r_\beta},
\end{equation*}
where $r_\beta=\min\{2,\beta\}$ and the expectation is under the conditional forward OU coupling. The integral is over $t>0$; no score at $t=0$ is required.
\end{lemma}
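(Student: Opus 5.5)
The plan is to treat the two regimes the preceding remark singles out separately. In each, the integrand is reduced to quantities controlled by Assumption~\ref{ass:density} uniformly in $\my$.

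\emph{Case $0<\beta\le1$.} Here $r_\beta=\beta$. I would use the crude bound
\[
\|a-b\|_2^2\le2\|a\|_2^2+2\|b\|_2^2 ,
\]
which gives
\[
\int_0^{t_1}\mathbb E\|\ms^0_t-\ms^0_{t_1}\|_2^2\dd t\le 2\int_0^{t_1}J(t)\dd t+2t_1J(t_1),
\qquad J(t):=\mathbb E_{q_t(\cdot\mid\my)}\|\nabla\log q_t\|_2^2 .
\]
So it suffices to show $J(t)\lesssim t^{\beta-1}$ for small $t$. Then $\int_0^{t_1}J\lesssim t_1^\beta/\beta$ and $t_1J(t_1)\lesssim t_1^\beta$.

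To obtain the Fisher information bound, I would write
\[
\nabla\log q_t(\mx\mid\my)=\nabla\log\phi_t+\nabla\log\frac{\mathbb E[\bar f(\mu_t^{-1}\cdots)]}{\cdots}.
\]
More concretely, the Gaussian factor $e^{-C_2\|\mx\|^2/2}$ convolved with the OU kernel is again an explicit Gaussian whose score is $O(\|\mx\|)$ uniformly in small $t$. The remaining factor is a Gaussian smoothing of $f$ at scale $\sigma_t\asymp\sqrt t$. A standard mollifier estimate for $\cH^\beta$ functions with $\beta\le1$ then bounds its gradient by $B\sigma_t^{\beta-1}$. Because $f\ge C_1$, the gradient of the logarithm is $\lesssim\sigma_t^{\beta-1}$. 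Squaring gives
\[
J(t)\lesssim 1+\sigma_t^{2\beta-2}\lesssim t^{\beta-1}.
\]
The Gaussian moments are uniform by Lemma~\ref{lem:tails-score}.

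\emph{Case $\beta>1$.} Here I need $t_1^{\min\{2,\beta\}}$, which requires cancellation between $\ms^0_t$ and $\ms^0_{t_1}$. I would invoke the increment estimate of \citet{tian2026}, adapted to the OU normalization through the same rescaling used in the proof of Lemma~\ref{lem:approximation}. For $0<t<t_1$ small, it gives
\[
\mathbb E\|\ms^0_t(\mx_t)-\ms^0_{t_1}(\mx_{t_1})\|_2^2\lesssim (t_1-t)\,t_1^{\min\{\beta,2\}-2}+t_1^{\min\{\beta,2\}-1}.
\]
Integrating over $[0,t_1]$ then yields $t_1^{r_\beta}$.

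If I had to derive this estimate directly, I would use Itô's formula for the martingale-type process $\ms^0_t(\mx_t)$ along the forward OU path. Its quadratic variation is $\int\mathbb E\|\nabla^2\log q_s\|_F^2\dd s$. The same smoothing argument, now applied with $\beta$ derivatives and with $f\ge C_1$ used to control quotients, gives
\[
\mathbb E\|\nabla^2\log q_s\|_F^2\lesssim s^{\min\{\beta,2\}-2}.
\]
The drift part contributes $O(1)$ per unit time. I expect this step to be the main obstacle. One difficulty is bounding the Hessian of $\log q_s$ uniformly. The other is keeping the $\beta\ge2$ cap honest, since the higher regularity of the Gaussian part cannot improve the $t$-order of the increment beyond $t_1^2$.

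Both cases give constants depending only on $d_x,d_y,\beta,C_1,C_2,B$, uniformly in $\my$.
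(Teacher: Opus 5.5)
Your proposal is correct and follows the paper's proof. For $0<\beta\le1$ the paper uses exactly your bound $2\int_0^{t_1}J+2t_1J(t_1)$, with $J(t)\lesssim 1+t^{\beta-1}$. That estimate comes from the completed-square representation $q_t=G\varphi_{v_t}F_t$ with $F_t(\mx)=\mathbb E f(b_t\mx+\tau_t\mathbf Z)$ and $\tau_t\asymp\sqrt t$, together with the kernel-derivative bound $\|\nabla F_t\|_2\lesssim\tau_t^{\beta-1}$; at $\beta=1$ the paper uses the Lipschitz bound from bounded first derivatives. For $\beta>1$ the paper likewise cites the increment estimate of \citet{tian2026} (their Lemma~17), but in its integrated form with exponent $1+\min\{1,\beta-1\}=r_\beta$. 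So neither your pointwise version nor an It\^o-based derivation is needed.
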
}

\rev{\begin{proof}
For $\beta>1$, apply Lemma~17, equation~(51), of \citet{tian2026} with their smoothness parameter $r:=\beta$. The normalized extension and affine rescaling in Appendix~\ref{sec:proof-score} give their cube-domain density assumption uniformly in $\my$. Their exponent is $1+\alpha$ with $\alpha:=\min\{1,\beta-1\}$, hence $1+\alpha=r_\beta$.

It remains to prove the bound for $0<\beta\le1$. Fix $\my$. Under Assumption~\ref{ass:density}, the positive factor $f$ is uniformly $\beta$-H\"older in the data variable. At $\beta=1$, this follows from the bounded first partial derivatives and the mean value theorem along coordinate directions. In particular,
\[
 |f(\mx+\mathbf h,\my)-f(\mx,\my)|\le C\|\mathbf h\|_2^\beta.
\]
To exploit Gaussian smoothing, let $\varphi_v$ be the density of $\cN(\mathbf0,v\mI_{d_x})$, and set $v_0:=C_2^{-1}$ and $G:=(2\pi/C_2)^{d_x/2}$. Completing the square in the OU convolution gives
\[
 q_t(\mx\mid\my)=G\varphi_{v_t}(\mx)F_t(\mx,\my),
 \qquad F_t(\mx,\my):=\mathbb E f(b_t\mx+\tau_t\mathbf Z,\my),
 \quad \mathbf Z\sim\cN(\mathbf0,\mI_{d_x}),
\]
where the variance and smoothing parameters are
\[
 v_t:=\mu_t^2v_0+\sigma_t^2,\qquad
 b_t:=\frac{\mu_tv_0}{v_t},\qquad
 \tau_t^2:=\frac{v_0\sigma_t^2}{v_t}.
\]
For sufficiently small $t>0$, these satisfy $v_t\asymp1$, $b_t\asymp1$, and $\tau_t\asymp\sqrt t$, with constants depending on the fixed density parameters. Positivity gives $F_t\ge C_1$. Differentiating the Gaussian kernel, rather than $f$, is justified by boundedness of $f$ and integrability of the kernel derivative. Since $\mathbb E\mathbf Z=\mathbf0$, it yields
\[
 \begin{aligned}
 \nabla_{\mx}F_t(\mx,\my)
 &=\frac{b_t}{\tau_t}\mathbb E\!\left[\mathbf Z
       \{f(b_t\mx+\tau_t\mathbf Z,\my)-f(b_t\mx,\my)\}\right],\\
 \|\nabla_{\mx}F_t(\mx,\my)\|_2
 &\le Cb_t\tau_t^{\beta-1}\mathbb E\|\mathbf Z\|_2^{\beta+1}
 \le Ct^{(\beta-1)/2}.
 \end{aligned}
\]
The noisy score is therefore $\nabla\log q_t(\mx\mid\my)=-\mx/v_t+\nabla_{\mx}F_t(\mx,\my)/F_t(\mx,\my)$. The uniform conditional second-moment bound implies that its mean squared norm, denoted by $I_{\my}(t)$, satisfies
\[
 I_{\my}(t):=\mathbb E\!\left[\|\nabla\log q_t(\mx_t\mid\my)\|_2^2\mid\my\right]
 \le C(1+t^{\beta-1}).
\]
This bound may diverge at zero when $\beta<1$, but is integrable for every fixed $\beta>0$. Applying the squared-norm inequality to the score difference gives
\[
 \begin{aligned}
 &\int_0^{t_1}\mathbb E\!\left[
 \|\nabla\log q_t(\mx_t\mid\my)-\nabla\log q_{t_1}(\mx_{t_1}\mid\my)\|_2^2\mid\my\right]\dd t\\
 &\quad\le 2\int_0^{t_1}I_{\my}(t)\dd t+2t_1I_{\my}(t_1)
 \le C\left(t_1+\frac{t_1^\beta}{\beta}+t_1^\beta\right)
 \le C_\beta t_1^\beta.
 \end{aligned}
\]
The last step uses $t_1\le1$ and $\beta\le1$. Since $r_\beta=\beta$ in this range, the result follows. All constants are uniform in $\my$, but need not remain bounded as $\beta\downarrow0$.
\end{proof}}

\paragraph{Chi-squared smoothing: completion of Lemma~\ref{lem:small-time-main}.}
Suppose Assumption~\ref{ass:density} holds with $\beta>0$. Fix $\my$ and suppress it in the densities. Recall the Gaussian representation and parameters $v_0,G,v_t,b_t,\tau_t$ from the proof of Lemma~\ref{lem:first-interval}. That convolution identity is valid for every $\beta>0$, and gives
\[
 q_0(\mx)=G\varphi_{v_0}(\mx)f(\mx,\my),\qquad
 q_t(\mx)=G\varphi_{v_t}(\mx)F_t(\mx,\my),
\]
\[
 F_t(\mx,\my)=\mathbb E f(b_t\mx+\tau_t\mathbf Z,\my),
 \quad \mathbf Z\sim\cN(\mathbf0,\mI_{d_x}).
\]
The bounds on the smooth factor imply $C_1\le F_t\le B$. Moreover, $|v_t-v_0|+|b_t-1|\le Ct$ and $\tau_t^2\le Ct$ for small $t$. We bound $F_t-f$ separately in the two smoothness ranges, with constants uniform in $\my$.

\textit{The range $0<\beta\le1$.} For $0<\beta<1$, Assumption~\ref{ass:density} gives H\"older continuity in the data variable. For $\beta=1$, the bounded first partial derivatives give the same increment bound with exponent one, by the mean value theorem along coordinate directions. Thus, in both cases,
\[
 |f(\mx+\mathbf h,\my)-f(\mx,\my)|
 \le C\|\mathbf h\|_2^\beta.
\]
Apply this bound to the Gaussian displacement $\mathbf h=(b_t-1)\mx+\tau_t\mathbf Z$. Subadditivity of the power $\beta\le1$ and the finite Gaussian $\beta$-moment give
\[
 \begin{aligned}
 |F_t(\mx,\my)-f(\mx,\my)|
 &\le C\bigl(|b_t-1|^\beta\|\mx\|_2^\beta
              +\tau_t^\beta\mathbb E\|\mathbf Z\|_2^\beta\bigr)\\
 &\le C\bigl(t^\beta\|\mx\|_2^\beta+t^{\beta/2}\bigr)
 \le Ct^{\beta/2}(1+\|\mx\|_2^2).
 \end{aligned}
\]
Here $t\le1$ and $r_\beta=\beta$, so this is the required $t^{r_\beta/2}$ estimate without differentiating $f$ when $\beta<1$.

\textit{The range $\beta>1$.} The Taylor remainder in the data variable satisfies
\[
 |f(\mx+\mathbf h,\my)-f(\mx,\my)
   -\nabla_{\mx}f(\mx,\my)^\top\mathbf h|
 \le C\|\mathbf h\|_2^{r_\beta}.
\]
Here the bound follows from H\"older continuity of the gradient when $1<\beta<2$, and from bounded second derivatives when $\beta\ge2$. Apply it with $\mathbf h=(b_t-1)\mx+\tau_t\mathbf Z$. The Gaussian linear term has mean zero, so
\[
 |F_t(\mx,\my)-f(\mx,\my)|
 \le C\bigl(t\|\mx\|_2+t^{r_\beta}\|\mx\|_2^{r_\beta}
                 +t^{r_\beta/2}\bigr)
 \le Ct^{r_\beta/2}(1+\|\mx\|_2^2).
\]
In either smoothness range, therefore, $|F_t-f|\le Ct^{r_\beta/2}(1+\|\mx\|_2^2)$. To translate this estimate into a divergence bound, decompose the density difference as
\[
 q_0-q_t=G\{(\varphi_{v_0}-\varphi_{v_t})f+\varphi_{v_t}(f-F_t)\}.
\]
The chi-squared divergence is defined by $\chi^2(P\|Q):=\int(p-q)^2/q$ for densities $p,q$. The bounds $f\le B$ and $q_t\ge GC_1\varphi_{v_t}$, together with $(u+v)^2\le2u^2+2v^2$, give
\[
 \chi^2(q_0\|q_t)
 \le C\int\frac{(\varphi_{v_0}-\varphi_{v_t})^2}{\varphi_{v_t}}\dd\mx
    +C\int\varphi_{v_t}|f-F_t|^2\dd\mx.
\]
For sufficiently small $t$, the variance satisfies $2v_t>v_0$, and Gaussian integration gives the first integral as
\[
 \left(\frac{v_t^2}{v_0(2v_t-v_0)}\right)^{d_x/2}-1
 =O(t^2),
\]
while the second is $O(t^{r_\beta})$ by the preceding bound and uniformly bounded Gaussian fourth moments. Since $r_\beta\le2$, we obtain, uniformly in $\my$ and for sufficiently small $t>0$,
\begin{equation}
 \chi^2\!\left(q_0(\cdot\mid\my)\,\middle\|\,
 q_t(\cdot\mid\my)\right)\le Ct^{r_\beta}.
 \label{eq:smoothing-chi-square}
\end{equation}
This estimate is stronger than the corresponding smoothing bound in KL: the standard inequality $\KLdiv(P\|Q)\le\log\{1+\chi^2(P\|Q)\}\le\chi^2(P\|Q)$ follows, for example, from \citet[Eq.~(16)]{nishiyama2020relations}. Averaging \eqref{eq:smoothing-chi-square} over $\my$ and adding Lemma~\ref{lem:first-interval} proves Lemma~\ref{lem:small-time-main}.

For $t\ge t_1$, the standard score-increment estimate applies under a relative step-size condition in the normalization of \eqref{eq:forward}.

\begin{lemma}[Discretization after $t_1$]\label{lem:discretization}
Suppose the conditional second moment of $\mx_0$ is finite for almost every $\my$. There is a universal $c_\star>0$ such that, if
\begin{equation}
 \frac{h_k}{\sigma_{t_{k-1}}^2}\le\frac{c_\star}{d_x},
 \qquad k=2,\ldots,N,
 \label{eq:step-condition}
\end{equation}
then, for almost every $\my$, with a constant independent of $\my$ and expectation under the conditional forward OU coupling,
\rev{\begin{align}
 &\sum_{k=2}^N\int_{t_{k-1}}^{t_k}
 \mathbb E\!\left[
 \|\nabla\log q_t(\mx_t\mid\my)
 -\nabla\log q_{t_k}(\mx_{t_k}\mid\my)\|_2^2\mid\my\right]\dd t\nonumber\\
 &\qquad\le Cd_x^2\sum_{k=2}^N
 \frac{h_k^2}{\sigma_{t_{k-1}}^4}.
 \label{eq:disc-chen}
\end{align}}
\end{lemma}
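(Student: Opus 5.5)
Following \citet{chen2023improved}, I would fix $\my$ with finite conditional second moment and bound, for each $k\ge2$ and $t\in[t_{k-1},t_k]$, the increment $\mathbb E\|\ms^0_t(\mx_t)-\ms^0_{t_k}(\mx_{t_k})\|_2^2$ by a quantity of order $d_x^2h_k/\sigma_{t_{k-1}}^4$ plus lower-order terms. Integrating over the interval then gives $h_k^2/\sigma_{t_{k-1}}^4$, and summing gives \eqref{eq:disc-chen}. Their argument is written for the OU process with drift $-\mx/2$ and unit diffusion. I would transfer it through the time change $s=2t$ already used in the proof of Lemma~\ref{lem:chen-grid}(a). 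Under this change, step lengths double and $\sigma^2$ is unchanged, so the bound and the step condition \eqref{eq:step-condition} hold up to universal constants.

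The core argument treats the score process $\ms^0_t(\mx_t)$ along the forward path, for $t>0$, as a (reverse-time) semimartingale. The first key step is the martingale representation $\ms^0_t(\mx_t)=\mathbb E[\nabla\log q_{t\mid0}(\mx_t\mid\mx_0)\mid\mx_t]$ (Tweedie). The second is an Itô computation showing that $e^{-t}\ms^0_t(\mx_t)$, or its rescaled analogue, is a reverse-time martingale up to an explicit drift. This yields
\[
\mathbb E\|\ms^0_t(\mx_t)-\ms^0_{t_k}(\mx_{t_k})\|_2^2\lesssim \mathbb E\|\ms^0_{t}\|^2(1-e^{-(t_k-t)})^2+\bigl(\mathbb E\operatorname{tr}\Sigma_{t_{k-1}}-\mathbb E\operatorname{tr}\Sigma_{t_k}\bigr)/\sigma^4,
\]
where $\Sigma_t$ is the posterior covariance of $\mx_0$ given $\mx_t$. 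The third step uses the Hessian identity $\nabla^2\log q_t=-\sigma_t^{-2}\mI+\mu_t^2\sigma_t^{-4}\Sigma_t$ together with the fact that $\mathbb E\|\nabla^2\log q_t\|_F^2\lesssim d_x^2/\sigma_t^4$ for every data law. Controlling the quadratic variation of the score martingale by this Hessian moment then gives the per-interval estimate.

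The main obstacle is making the per-interval bound uniform without smoothness of $q_0$. A naive Hessian bound yields $d_x/\sigma^4$ times the step, integrated, but cross terms require the step to be small relative to $\sigma_{t_{k-1}}^2/d_x$ so that $\sigma_t$ is comparable across the interval. This is exactly where \eqref{eq:step-condition} with universal $c_\star$ is used: it guarantees $\sigma_{t}^2\asymp\sigma_{t_{k-1}}^2$ for $t\in[t_{k-1},t_k]$ and makes the Grönwall-type absorption in their Lemma~C.x close. I would invoke their per-step lemma directly rather than redo it, checking only the constant bookkeeping under the time change.

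Finally, the bound at each fixed $\my$ involves only $d_x$ and the grid, not moments of $q_0(\cdot\mid\my)$, because the second moment enters only through terms that cancel or are absorbed. This gives a constant independent of $\my$, and the statement holds for almost every $\my$.
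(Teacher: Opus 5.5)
Your proposal is correct and follows the paper's proof. Both fix $\my$ with finite conditional second moment, apply the discretization estimate of \citet{chen2023improved} (their Lemma~D.2) after the time change $s=2t$, and use the fact that its constant depends only on $d_x$, not on the moment, to get a bound uniform in $\my$. The internal sketch is not needed and its trace term has the wrong sign ($\mathbb E\operatorname{tr}\Sigma_t$ is nondecreasing in $t$, so $\mathbb E\operatorname{tr}\Sigma_{t_{k-1}}-\mathbb E\operatorname{tr}\Sigma_{t_k}\le 0$), but this does not affect your argument because you invoke the per-step lemma directly.
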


\begin{proof}
Fix $\my$ with finite conditional second moment and apply Lemma~D.2, equivalently the discretization component of Theorem~2.2, of \citet{chen2023improved}, after the constant-factor time rescaling used in the proof of Lemma~\ref{lem:chen-grid}(a). The cited estimate requires only the relative step condition and a finite second moment, and its constant does not depend on that moment. This proves \eqref{eq:disc-chen}; averaging over $\my$ preserves the bound. The conditional moment condition holds in the finite-initialization case of Lemma~\ref{lem:chen-grid}(a), and in particular under Assumption~\ref{ass:density}. The estimate applies only for $k\ge2$, because $\sigma_{t_0}=0$ and Lemma~\ref{lem:first-interval} bounds the error over the small-time interval $[0,t_1]$ separately.
\end{proof}

\subsubsection{Arbitrary positive-time steps}\label{sec:arbitrary-step}
Consider the nontrivial case $I_T<\infty$ of Lemma~\ref{lem:chen-grid}(a), so that the preceding conditional moment condition holds. A quadratic step-size bound extends the discretization estimate to arbitrary positive-time steps, without a relative-step restriction.
For a positive-time interval \([a,a+h]\), let
\[
\mathcal D(a,h):=\int_a^{a+h}\mathbb E\|s_t^0(X_t,Y)-s_{a+h}^0(X_{a+h},Y)\|_2^2\dd t.
\]
When \(h/\sigma_a^2\le c_\star/d_x\), the single-interval estimate used in Lemma~\ref{lem:discretization} yields \(\mathcal D(a,h)\le Cd_x^2h^2/\sigma_a^4\). For a larger step, the conditional denoising identity and Jensen's inequality give \(\mathbb E\|s_t^0(X_t,Y)\|_2^2\le d_x/\sigma_t^2\), and hence
\[
\mathcal D(a,h)\le\frac{4d_xh}{\sigma_a^2}
\le\frac{4d_x^2h^2}{c_\star\sigma_a^4}.
\]
Thus $D(t_1)\le Cd_x^2\Pi$ with no relative-step condition. Substituting into the sharper retained-step comparison \eqref{eq:retained-KL-comparison} gives
\begin{equation}
\mathbb E_{\my}\KLdiv(q_{t_1}(\cdot\mid\my)\|\widehat p_{T-t_1}(\cdot\mid\my))
\le I_T+2\rho_{\mathrm{es}}^2+Cd_x^2\Pi.
\label{eq:early-smoothed-comparison}
\end{equation}
Under its additional Gaussian-factor assumptions, Lemma~\ref{lem:arbitrary-grid}(b) transfers this positive-time comparison to the original target. The replacement $I_T\lesssim e^{-T}$ is justified only when the density parameters, and hence the second moment in \eqref{eq:init-bound}, are controlled.

\subsection{Properties of the Nonuniform Time Discretization}\label{sec:grid-properties}

\rev{The statistical bound on $A_n$ already follows from the endpoints by \eqref{eq:A-endpoints}. The nonuniform discretization has a separate numerical purpose: geometric spacing near zero and uniform spacing away from zero enforce the relative step-size condition while controlling the discretization sum and the number of score evaluations. The following lemma records these properties together.}

\begin{lemma}[Grid bounds]\label{lem:grid}
For the recursive grid \eqref{eq:parameters}--\eqref{eq:grid-rule}, if $c_0$ is sufficiently small, then \eqref{eq:step-condition} holds and
\begin{equation}
 A_n=\sum_{k=1}^N\frac{h_k}{\sigma_{t_k}^2}
 \le C\left(1+\log\frac1{t_1}+T\right),
 \label{eq:A-grid}
\end{equation}
\begin{equation}
 \sum_{k=2}^N\frac{h_k^2}{\sigma_{t_{k-1}}^4}
 \le Cc_n\left(\log\frac1{t_1}+T\right),
 \label{eq:Pi-grid}
\end{equation}
and
\begin{equation}
 N\le Cc_n^{-1}
 \left(1+\log\frac1{t_1}+T\right).
 \label{eq:N-grid}
\end{equation}
\end{lemma}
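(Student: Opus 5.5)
The plan is to split the grid into two phases and handle each explicitly. In the geometric phase, $t_{k-1}<1-c_n$, the update is $t_k=t_{k-1}/(1-c_n)$, so $h_k=c_nt_k$. In the linear phase, $t_{k-1}\ge1-c_n$, the step is $h_k=c_n$. In both cases $h_k=c_n\min\{t_k,1\}$ up to a bounded factor, apart from a possibly shortened final step and the single transition step. Throughout we use the elementary comparison $\sigma_t^2=1-e^{-2t}\asymp\min\{t,1\}$, with universal constants, from the concavity argument in the proof of Lemma~\ref{lem:grid-factor}.

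\textbf{Step-size condition \eqref{eq:step-condition}.} For $k\ge2$ we need to control $h_k/\sigma_{t_{k-1}}^2$.
\begin{itemize}
\item In the geometric phase, $h_k=c_nt_{k-1}/(1-c_n)$ and $\sigma_{t_{k-1}}^2\gtrsim t_{k-1}$, so the ratio is at most $Cc_n$.
\item In the linear phase, $t_{k-1}\ge1-c_n\ge1/2$, so $\sigma_{t_{k-1}}^2\ge c$ and the ratio is again at most $Cc_n$.
\end{itemize}
Since $c_n=c_0n^{-2\beta/(D+2\beta)}\le c_0$, choosing $c_0\le c_\star/(Cd_x)$ gives \eqref{eq:step-condition}. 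Truncating a step at $T$ only shortens it, so the condition survives.

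\textbf{Bound \eqref{eq:A-grid}.} This is immediate from Lemma~\ref{lem:grid-factor}, because $\log^+(1/t_1)\le\log(1/t_1)$ for $t_1<1$. No grid structure is needed.

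\textbf{Bound \eqref{eq:Pi-grid}.} Write each summand as $h_k^2/\sigma_{t_{k-1}}^4=(h_k/\sigma_{t_{k-1}}^2)^2\le Cc_n\,h_k/\sigma_{t_{k-1}}^2$, using the step-size estimate above. It then suffices to show $\sum_{k\ge2}h_k/\sigma_{t_{k-1}}^2\lesssim\log(1/t_1)+T$.
\begin{itemize}
\item Because $t_k\le t_{k-1}/(1-c_n)\le2t_{k-1}$ in the geometric phase, we have $\sigma_{t_{k-1}}^2\gtrsim\sigma_{t_k}^2$.
\item In the linear phase both $\sigma_{t_{k-1}}^2$ and $\sigma_{t_k}^2$ are bounded below by a constant.
\end{itemize}
Hence the sum is at most $C\sum_kh_k/\sigma_{t_k}^2=CA_n$, which by \eqref{eq:A-grid} is $O(1+\log(1/t_1)+T)$. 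Since $t_1$ is small and $T\ge1$, the additive $1$ is absorbed, giving \eqref{eq:Pi-grid}.

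\textbf{Step count \eqref{eq:N-grid}.} Count the two phases separately.
\begin{itemize}
\item In the geometric phase, $t_k=t_1(1-c_n)^{-(k-1)}$, and the phase ends once $t_k\ge1-c_n$. Its length is at most $1+\log(1/t_1)/(-\log(1-c_n))\le1+\log(1/t_1)/c_n$.
\item The linear phase covers an interval of length at most $T$ with steps $c_n$, so it has at most $T/c_n+1$ steps.
\item One more step accounts for the truncated final step.
\end{itemize}
Summing gives $N\le C c_n^{-1}(1+\log(1/t_1)+T)$, using $c_n\le1$.

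\textbf{Main obstacle.} The only delicate points are bookkeeping:
\begin{itemize}
\item The transition step, where $t_{k-1}<1-c_n$ but $t_k$ exceeds $1$, still satisfies $h_k\le c_nt_{k-1}/(1-c_n)$, so the same bounds hold.
\item The comparison $\sigma_{t_{k-1}}\asymp\sigma_{t_k}$ requires ratios of consecutive grid points bounded by $2$, which holds once $c_n\le1/2$.
\end{itemize}
I expect nothing beyond these case checks to be required.
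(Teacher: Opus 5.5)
Your proposal is correct and follows the paper's proof. The same geometric/linear phase split gives the relative step condition and the step count, and \eqref{eq:A-grid} is read off from Lemma~\ref{lem:grid-factor}. The one small difference is in \eqref{eq:Pi-grid}: you bound $\Pi\le Cc_n\sum_{k\ge2}h_k/\sigma_{t_{k-1}}^2\le Cc_nA_n$ using $\sigma_{t_{k-1}}\asymp\sigma_{t_k}$, whereas the paper bounds each summand by $Cc_n^2$ and multiplies by the count $N=O\bigl(c_n^{-1}(1+\log(1/t_1)+T)\bigr)$. Your version avoids the step count, while the paper's per-step form also gives the matching order $\Pi\asymp Nc_n^2$ used for \eqref{eq:Pi-order-optimal}.
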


\rev{\begin{proof}
For sufficiently large $n$, the calibrated cutoff has $t_1\le1$. The general endpoint estimate \eqref{eq:A-endpoints} immediately gives \eqref{eq:A-grid}, without using the geometric--linear structure. We therefore check only the numerical properties.

For a nonshortened geometric step, the rule gives
$t_{k-1}=(1-c_n)t_k$ and $h_k=c_nt_k$. Since these steps remain in $[0,1]$ and
$\sigma_t^2=1-e^{-2t}\asymp t$ on $[0,1]$,
\[
 \frac{h_k}{\sigma_{t_{k-1}}^2}
 \lesssim\frac{c_n}{1-c_n},
 \qquad
 \frac{h_k^2}{\sigma_{t_{k-1}}^4}
 \lesssim c_n^2.
\]
The number of geometric steps is
$O(c_n^{-1}\log(1/t_1))$, because
$\log(t_k/t_{k-1})=-\log(1-c_n)\asymp c_n$.

Every remaining nonshortened step starts from $t_{k-1}\ge1-c_n$. Taking $c_n\le1/2$, its forward variance is bounded below by a positive constant, $h_k=c_n$, and
$\sigma_{t_{k-1}}^2\asymp1$. Thus its relative step is $O(c_n)$ and its contribution to \eqref{eq:Pi-grid} is $O(c_n^2)$. The first such step crosses time one whenever $t_{k-1}<1$; there are $O((1+T)/c_n)$ steps in this regime. A shortened final step satisfies the same upper bounds and adds at most one to the count.

Combining the two regimes proves \eqref{eq:Pi-grid} and \eqref{eq:N-grid}; the additive constant in the discretization bound is absorbed by $\log(1/t_1)+T$ for the calibrated parameters and large $n$. The estimates also give $h_k/\sigma_{t_{k-1}}^2\le Cc_n/(1-c_n)$. Choosing $c_0$ sufficiently small establishes \eqref{eq:step-condition}. These numerical estimates agree with Lemma~D.3 of \citet{chen2023improved}, in the present indexing convention.

For completeness, we verify the order-optimality in \eqref{eq:Pi-order-optimal}, as discussed by \citet[Theorem~2.2 and Appendix~B]{chen2023improved}. Write $M=N-1$ for the number of retained intervals and $L=\log(1/t_1)+T$. For any grid with the same endpoints and $M$ retained intervals, Cauchy--Schwarz and the monotonicity of $t\mapsto1/\sigma_t^2$ give
\[
 \begin{aligned}
 \Pi
 &=\sum_{k=2}^N\left(\frac{h_k}{\sigma_{t_{k-1}}^2}\right)^2
 \ge\frac1M\left(\sum_{k=2}^N\frac{h_k}{\sigma_{t_{k-1}}^2}\right)^2\\
 &\ge\frac1M\left(\int_{t_1}^T\frac{\dd t}{1-e^{-2t}}\right)^2,
 \qquad
 \int_{t_1}^T\frac{\dd t}{1-e^{-2t}}
 =\frac12\log\frac{e^{2T}-1}{e^{2t_1}-1}\asymp L.
 \end{aligned}
\]
The last comparison holds for the calibrated endpoints and sufficiently large $n$. For the prescribed grid, counting geometric and linear steps also gives the matching order $M\asymp L/c_n$. Every nonshortened retained step has $h_k/\sigma_{t_{k-1}}^2\asymp c_n$, and the final shortened step has ratio at most $Cc_n$. Consequently,
\[
 \Pi\asymp Mc_n^2\asymp\frac{L^2}{M}
 \lesssim c_nL.
\]
This proves \eqref{eq:Pi-order-optimal}: the scheme attains, up to constants, the lower bound for grids with the same endpoints and retained-step count. It is an optimality statement for $\Pi$, not for the actual KL error.
\end{proof}}

\rev{\subsection{Proof of Lemma~\ref{lem:arbitrary-grid}(b)}\label{sec:proof-arbitrary-grid}
\begin{proof}
Fix the training sample and assume the Gaussian-factor bounds in Lemma~\ref{lem:arbitrary-grid}(b), together with its bounded, measurable fitted scores. No H\"older regularity is used in this proof. All constants are uniform in \(\my\) and independent of the interior grid and the fitted network, except through the displayed score bound.

\textit{Step 1: transfer under a bounded density ratio.} Let $P,Q,R$ be probability measures with densities and suppose $p/q\le H$ for a fixed constant $H>1$. The residual measure $S:=(HQ-P)/(H-1)$ is then a probability measure and satisfies $HQ=P+(H-1)S$. If $\KLdiv(Q\|R)=\infty$, the claimed upper bound is trivial. Otherwise, $p\le Hq$ and $s\le Hq/(H-1)$ ensure that all terms in the following identity are finite. Expanding the logarithms gives
\[
 \begin{aligned}
 \KLdiv(P\|R)+(H-1)\KLdiv(S\|R)
 &=H\KLdiv(Q\|R)+\KLdiv(P\|Q)\\
 &\quad +(H-1)\KLdiv(S\|Q).
 \end{aligned}
\]
Drop the nonnegative term involving $\KLdiv(S\|R)$ and apply the preceding KL--chi-squared inequality to the last two terms. Since $s-q=(q-p)/(H-1)$, this yields the transfer bound
\begin{equation}
 \KLdiv(P\|R)
 \le H\KLdiv(Q\|R)+\frac{H}{H-1}\chi^2(P\|Q).
 \label{eq:bounded-ratio-kl-transfer}
\end{equation}
This is a consequence of the mixture identity, not a triangle inequality. The density ratio $q_0/q_{t_1}$ need not be globally bounded when $C_2<1$. We therefore apply \eqref{eq:bounded-ratio-kl-transfer} on a large ball and control the omitted tail using a grid-independent lower bound on the sampler density.

\textit{Step 2: a sampler-density bound independent of the grid.}
Fix \(y\) and write \(U=T-t_1\). Consider the reference process
\[
 \mathrm dZ_u=Z_u\mathrm du+\sqrt2\,\mathrm dW_u,\qquad
 Z_0\sim\mathcal N(0,I_{d_x}),\qquad V_u=2e^{2u}-1.
\]
Let \(0=u_0<\cdots<u_m=U\) be the reverse partition corresponding to the
retained steps, and set
\(\theta_i=\sqrt2\,\widehat s_{T-u_i}(Z_{u_i},y)\).
The bound \(\left\|\theta_i\right\|_2\le\sqrt2 M_*\) ensures that
\[
 L=\exp\left\{I-\frac12\sum_i\left\|\theta_i\right\|_2^2\Delta u_i\right\},
 \qquad I=\sum_i\theta_i^\top\Delta W_i,
\]
is a valid Girsanov density. Under the measure with density \(L\), the
reference process follows the frozen-score interpolation of the sampler.
If \(r_y\) denotes the early-stopped output density, then
\begin{equation}
 \frac{r_y(x)}{\varphi_{V_U}(x)}=\mathbb E[L\mid Z_U=x].
 \label{eq:rewrite-endpoint-density}
\end{equation}

Set \(a_i=e^{U-u_i}\), \(b_i=e^{U-u_{i+1}}\), and \(v_i=a_i^2-1\).
Conditional Gaussian regression gives
\[
 \mathbb E[\Delta W_i\mid\mathcal F_{u_i},Z_U=x]
   =\frac{\sqrt2(a_i-b_i)}{v_i}(x-a_iZ_{u_i}).
\]
Given \(Z_U=x\), the residual on the right has mean \(v_ix/V_U\) and
covariance \(a_i^2V_{u_i}v_i I_{d_x}/V_U\preceq v_iI_{d_x}\). Since
\(\theta_i\) is \(\mathcal F_{u_i}\)-measurable, this implies
\[
 |\mathbb E[I\mid Z_U=x]|
 \le2M_*\left\{
 \frac{e^U-1}{V_U}\left\|x\right\|_2
 +\sqrt{d_x}\sum_i\frac{a_i-b_i}{\sqrt{a_i^2-1}}\right\}.
\]
The decreasing integrand gives the partition-independent estimate
\[
 \sum_i\frac{a_i-b_i}{\sqrt{a_i^2-1}}
 \le\int_1^{e^U}\frac{\mathrm dz}{\sqrt{z^2-1}}
 =\operatorname{arcosh}(e^U).
\]
Conditional Jensen's inequality in \eqref{eq:rewrite-endpoint-density} now
yields, almost everywhere,
\begin{equation}
 r_y(x)\ge\varphi_{V_U}(x)
 \exp\left\{-2M_*\frac{e^U-1}{V_U}\left\|x\right\|_2
 -2M_*\sqrt{d_x}\operatorname{arcosh}(e^U)-M_*^2U\right\}.
 \label{eq:rewrite-density-lower}
\end{equation}
Using \(V_U\ge1\), \(\log V_U\le2U+\log2\),
\((e^U-1)/V_U\le1\), and \(\operatorname{arcosh}(e^U)\le U+\log2\), together
with the upper Gaussian envelope for \(q_0\), gives
\begin{equation}
 \left[\log\frac{q_0(x\mid y)}{r_y(x)}\right]_+
 \le C(1+T)(1+M_*^2)(1+\left\|x\right\|_2^2).
 \label{eq:rewrite-log-density}
\end{equation}
In particular, this estimate contains neither the number of steps nor the
inverse variance of the last retained innovation.

\textit{Step 3: localization and restoration of the target tail.}
Take \(t=t_1\), \(P=q_0(\cdot\mid y)\),
\(Q=q_t(\cdot\mid y)\), and \(R=r_y\). Let
\(B_t=\{x:\left\|x\right\|_2^2\le1/t\}\), \(\delta=P(B_t^c)\), and
\(P_B=P(\cdot\mid B_t)\). The Gaussian envelope gives
\[
 \int_{B_t^c}(1+\left\|x\right\|_2^2)p(x)\dd x\le Ce^{-c/t}.
\]
For sufficiently small \(t\), we have \(\delta\le1/2\), and
the Gaussian convolution identity from the proof of Lemma~\ref{lem:first-interval}, which requires only bounded, measurable $f$, gives
\[
 \frac{q_0(x\mid y)}{q_t(x\mid y)}
 =\frac{\varphi_{v_0}(x)f(x,y)}{\varphi_{v_t}(x)F_t(x,y)}
 \le\frac{B}{C_1}\left(\frac{v_t}{v_0}\right)^{d_x/2}
 \exp\!\left\{\frac12\left(\frac1{v_t}-\frac1{v_0}\right)\|x\|_2^2\right\}
 \le C\exp(Ct\|x\|_2^2).
\]
Here $C_1\le F_t\le B$ and $|v_t-v_0|\le Ct$. Choose $\bar t_1\le1/2$ small enough that $2v_t>v_0$ and $\delta\le1/2$ for all $0<t\le\bar t_1$. This choice depends only on $C_1,C_2,B,d_x$. The Gaussian bounds then imply $\chi^2(P\|Q)<\infty$, and \(p_B\le Hq\) for a fixed \(H>1\) depending only on the same parameters.
Moreover,
\[
 \chi^2(P_B\|Q)
 \le\frac{1+\chi^2(P\|Q)}{(1-\delta)^2}-1
 \le4\chi^2(P\|Q)+8\delta
 \le4\chi^2(P\|Q)+Ce^{-c/t}.
\]
Apply \eqref{eq:bounded-ratio-kl-transfer} to \(P_B,Q,R\) and restore the
tail using the exact identity
\[
 \KLdiv(P\|R)=(1-\delta)\KLdiv(P_B\|R)
 +(1-\delta)\log(1-\delta)
 +\int_{B_t^c}p\log(p/r_y)\dd x.
\]
The logarithmic mass term is nonpositive. Bounding the tail integral with
\eqref{eq:rewrite-log-density} proves
\[
 \KLdiv(q_0(\cdot\mid y)\|r_y)
 \le C\KLdiv(q_t(\cdot\mid y)\|r_y)+C\chi^2(q_0(\cdot\mid y)\|q_t(\cdot\mid y))
       +C(1+T)(1+M_*^2)e^{-c/t}.
\]
Average over \(\my\) to obtain \eqref{eq:KL-early-general}, with the threshold $\bar t_1$ and constants specified in Lemma~\ref{lem:arbitrary-grid}(b). The proof retains the chi-squared discrepancy itself and does not invoke Lemma~\ref{lem:small-time-main} or any smoothness assumption. Combining with \eqref{eq:early-smoothed-comparison} further bounds the retained sampling error by initialization, score estimation, and discretization. The localization is only a proof
device and does not alter the sampler or introduce a training-sample event.

\end{proof}}

\rev{\subsection{Proof of Proposition~\ref{prop:sampling-bound}}\label{sec:proof-sampling-bound}
\begin{proof}
The setting of Theorem~\ref{thm:score} includes Assumption~\ref{ass:density}. Its Gaussian-factor bounds imply $I_T\lesssim e^{-T}$ by \eqref{eq:init-bound}. Lemma~\ref{lem:grid} controls $\Pi$ on the recursive grid, while Lemma~\ref{lem:small-time-main} gives
\[
 \int_0^{t_1}d_1(t)\dd t\le Ct_1^{r_\beta},
 \qquad
 \mathbb E_{\my}\chi^2(q_0\|q_{t_1})\le Ct_1^{r_\beta}.
\]
For the full sampler, Lemma~\ref{lem:chen-grid}(a) at reverse time $u=T$ and the decomposition of $D(0)$ give
\[
 \mathbb E_{\my}\KLdiv(q_0\|\widehat p_T)
 \lesssim e^{-T}+\rho^2(\widehat\ms,\ms^0)
                 +t_1^{r_\beta}+d_x^2\Pi.
\]
For early stopping, use the retained-step comparison \eqref{eq:early-smoothed-comparison}, corresponding to reverse time $u=T-t_1$, and then Lemma~\ref{lem:arbitrary-grid}(b). Since $\rho_{\mathrm{es}}^2\le\rho^2(\widehat\ms,\ms^0)$, this yields
\[
 \begin{aligned}
 \mathbb E_{\my}\KLdiv(q_0\|\widehat p_{T-t_1})
 &\lesssim e^{-T}+\rho^2(\widehat\ms,\ms^0)+t_1^{r_\beta}+d_x^2\Pi\\
 &\quad +(1+T)(1+M_*^2)e^{-c/t_1}.
 \end{aligned}
\]
The network envelope $M_t\lesssim\sqrt{\log K}/\sigma_t$, after conversion from maximum to Euclidean norm, implies $M_*^2\lesssim(\log n)/t_1$. Set $\lambda=2\beta/(D+2\beta)$ and $\nu=\lambda/r_\beta$. Under \eqref{eq:parameters},
\[
 (1+T)(1+M_*^2)e^{-c/t_1}
 \le C(1+\log n)(1+n^\nu\log n)e^{-cn^\nu}
 =o(n^{-\lambda})=o(t_1^{r_\beta}).
\]
For sufficiently large $n$, the cutoff also satisfies $t_1\le\bar t_1$ from Lemma~\ref{lem:arbitrary-grid}(b). Absorb the exponential remainder into the small-time term and substitute \eqref{eq:Pi-grid}. This proves \eqref{eq:schematic-decomposition} for both outputs, for each fixed training sample and fitted network in the prescribed class.

Finally, $K\asymp n^{D/(D+2\beta)}$, so the strict conditions $DC_->2\beta/r_\beta$ and $DC_+>2\beta$ ensure
\[
 K^{-C_-}\le t_1\le t_k\le T\le C_+\log K
\]
for all positive grid times and sufficiently large $n$. Thus the approximation window required by Theorem~\ref{thm:score} is respected. The endpoint bounds give
$T\le A_n\le C\{1+T+\log(1/t_1)\}$, hence
$A_n\asymp\log n\asymp\log K$.
\end{proof}}

\rev{\subsection{Proof of Theorem~\ref{thm:kl}}\label{sec:proof-kl-theorem}

\begin{proof}
With Proposition~\ref{prop:sampling-bound} established, it remains to insert the calibration and the score-estimation event. The choices \eqref{eq:parameters} give
\begin{equation}
 e^{-T}=n^{-\frac{2\beta}{D+2\beta}},
 \qquad
 t_1^{r_\beta}=n^{-\frac{2\beta}{D+2\beta}},
 \qquad
 c_n\left(\log\frac1{t_1}+T\right)
 =O\!\left(n^{-\frac{2\beta}{D+2\beta}}\log n\right).
 \label{eq:parameter-errors}
\end{equation}
Lemma~\ref{lem:grid} gives $A_n=O(\log n)$ and the step count \eqref{eq:N-rate}. Since $K\asymp n^{D/(D+2\beta)}$, the grid lies in the approximation range of Lemma~\ref{lem:approximation} for any fixed $C_->2\beta/(Dr_\beta)$ and $C_+>2\beta/D$, once $n$ is sufficiently large. Thus Theorem~\ref{thm:score} provides an event with probability at least \eqref{eq:kl-prob} on which
\begin{equation}
 \rho^2(\widehat\ms,\ms^0)
 \le Cn^{-\frac{2\beta}{D+2\beta}}\log^{2m+1}n.
 \label{eq:score-squared-final}
\end{equation}
On this event, substitute \eqref{eq:parameter-errors} and \eqref{eq:score-squared-final} into Proposition~\ref{prop:sampling-bound}. The score-error term dominates the initialization, small-time, and remaining discretization terms, giving the common target-KL bound \eqref{eq:kl-highprob}. The proposition's early-stopping transfer holds for every fitted network in the prescribed class, so no additional training-sample event is needed.

The logarithmic exponent satisfies $2m+1=\max\{19,\beta+2\}$. The full sampler uses $N$ score evaluations; omitting its final step uses $N-1$, with the same asymptotic order. This completes the proof.
\end{proof}}

%% file: introduction.tex
\section{Introduction}

\rev{Score-based diffusion models learn to reverse a process that gradually adds noise to data \citep{ho2020denoising,song2021score}. A neural network estimates the score, the gradient of the noisy log-density, and a numerical sampler uses this estimate to transform Gaussian noise into an observation \citep{vincent2011connection}. A guarantee for the generated distribution must therefore account for both learning and finite-step sampling. In conditional generation, each observation is paired with side information, such as a class label or covariates. The diffusion perturbs the observation while keeping its condition fixed. In this work, we study how the training sample size, conditional network complexity, and number of score evaluations jointly determine the accuracy of this procedure.}

\subsection{Prior Art}\label{sec:prior-art}

\paragraph{Score Learning and Statistical Guarantees.}
\rev{Score matching avoids the density normalizing constant \citep{Hyvarinen2005}, while denoising score matching provides an observable perturbation score for training \citep{vincent2011connection}. For unconditional diffusion, \citet{Oko2023} establish nearly minimax distribution-estimation rates over smoothness classes, and \citet{Zhang2024Minimax} remove density-lower-bound assumptions. Complementary analyses address neural training-sample complexity \citep{Gupta2024Training}, gradient-based optimization and generalization \citep{Han2024Score}, and optimal score estimation by empirical Bayes smoothing under sub-Gaussian tails and score regularity \citep{Wibisono2024Score}. These works clarify how data determine score accuracy. For a fully discrete conditional generator, that accuracy must be controlled at the particular times used by the numerical sampler. As such, we analyze a conditional score network fitted by minimizing a step-weighted denoising loss on the sampling grid, directly connecting its finite-sample error to the updates that generate an observation.}

\paragraph{Conditional Distribution Estimation.}
\rev{\citet{Li2022Conditional} provide a classical statistical benchmark for conditional density estimation, deriving minimax rates for a kernel-based estimator under smoothness in the generated variable and total-variation smoothness in the conditioning variable. Their results motivate accounting for regularity in both variables when analyzing conditional diffusion. In the diffusion setting, \citet{Fu2024UnveilCD} obtain smoothness-dependent TV rates under a positive Gaussian-factor density assumption, while \citet{Tang2025Conditional} allow different smoothness levels in the two variables and establish conditional TV rates under compact-support and density-lower-bound assumptions, together with manifold-adaptive Wasserstein guarantees. \citet{tian2026} establish conditional TV and square-root-KL guarantees under a positive Gaussian-factor model, including time-zero generation for smoothness greater than one. These statistical guarantees concern continuous-time conditional diffusion: in \citet{tian2026}, the reverse process is continuous before a final interval with a fixed score evaluation. In contrast, we provide a finite-sample guarantee for a fully discrete conditional generator, with learning and sampling analyzed on the same finite grid. This distinction is essential because an error bound integrated over diffusion time does not, by itself, control the score at the sampler's selected evaluation times. We retain the corresponding smoothness-dependent statistical exponent while explicitly accounting for every numerical update and the number of score evaluations. Moreover, our early-stopped guarantee controls the KL divergence from the original conditional data distribution, rather than only its smoothed version, for every $\beta>0$ (including the low-smoothness range $0<\beta\le1$).}

\paragraph{Sampling with Estimated Scores.}
\rev{Given an accurate score estimate, \citet{Lee2023General,chen2023sampling} establish convergence guarantees for diffusion samplers. \citet{chen2023improved} bound the KL divergence between the data distribution (or its Gaussian-smoothed version) and the generated distribution by three sources of error: initializing from a Gaussian rather than the exact terminal distribution, using an estimated score, and approximating the reverse process with finite time steps. Refined dimension and step bounds are obtained by \citet{Benton2024,LiYan2025}. \citet{Chen2026HighAccuracy} develop alternative algorithms with polylogarithmic dependence on inverse accuracy, given sufficiently accurate scores. These results quantify sampling error in terms of prescribed score accuracy, rather than the training sample size. \citet{Huang2026EndToEnd} connect learning and discrete sampling for unconditional residual score networks and Euler--Maruyama updates, including an explicit optimization gap. For conditional generation, an essential question is how to translate improved learning accuracy into improved output accuracy without letting numerical error dominate. We address this question by choosing the conditional network class and exponential-integrator grid together, so that finite-step sampling preserves the smoothness-dependent statistical rate. For a given number of training pairs, our bound specifies how many denoising steps suffice to attain this accuracy, with one score-network evaluation per step. The guarantee applies to both early-stopped and full sampling.}

\subsection{Motivation and Challenges}
\rev{Our motivation is to determine whether fully discrete conditional generation can retain the statistical accuracy of continuous-time models. From Section~\ref{sec:prior-art}, we see that prior work establishes smoothness-dependent statistical rates for continuous-time conditional diffusion \citep{Fu2024UnveilCD,Tang2025Conditional,tian2026}, while numerical sampling analyses bound the output error given a prescribed score accuracy \citep{chen2023improved}. The continuous-time guarantees do not account for discretization of the full reverse trajectory, and prescribed-score bounds do not establish the training sample size needed to achieve that accuracy. This motivates a conditional generation theory that connects finite training data to output accuracy and quantifies how many denoising steps suffice to preserve the statistical rate.}

\rev{Achieving this goal requires overcoming three challenges. Connecting learning to generation requires score-error control at the sampler's actual evaluation times, and an integrated-time bound alone does not ensure this control. The sampling grid must also keep numerical error within the statistical error bound. Finally, we must compare both early-stopped and full outputs with the original conditional distribution, rather than only a smoothed law, even in the low-smoothness range $0<\beta\le1$. These challenges lead to our central research question:}

\begin{center}
\rev{\emph{Can a fully discrete conditional diffusion model learned from training data attain smoothness-dependent accuracy relative to the original conditional data distribution with an explicit denoising-step count, while satisfying the same KL error bound with and without early stopping?}}
\end{center}

\subsection{Main Contribution}
\rev{We establish that fully discrete conditional diffusion can attain
smoothness-dependent statistical accuracy, with an explicit sampling budget
determined by the training sample size. For every smoothness index $\beta>0$, both early-stopped
and full sampling satisfy the same averaged conditional KL error bound
relative to the original conditional data distribution.
Specifically, our contribution is detailed as follows.} 
\begin{enumerate}
\item \emph{Learning guarantees for finite-step generation.}
\rev{We replace prescribed score accuracy with a finite-sample guarantee for
the conditional score actually learned on the sampling grid
(cf.\ Theorem~\ref{thm:score}). This resolves the mismatch between
integrated-time learning guarantees and finite-step generation, bringing
the score evaluations used by the sampler under direct statistical control.}

\item \emph{Guarantees for the original conditional data distribution.}
\rev{We establish KL error bounds against the original conditional data
distribution for both early-stopped and full sampling, covering every
$\beta>0$ (cf.\ Proposition~\ref{prop:sampling-bound}). In particular, our
early-stopping guarantee controls the unsmoothed conditional data distribution even in the
low-smoothness regime $0<\beta\le1$, rather than merely a Gaussian-smoothed
approximation.}

\item \emph{Statistical accuracy preserved under full discretization.}
\rev{We prove that numerical sampling need not degrade the
smoothness-dependent statistical accuracy of the learned model. Both
stopping rules achieve the same KL error bound and TV convergence rate,
and we specify the denoising-step budget sufficient to attain these
guarantees as the training sample grows
(cf.\ Theorem~\ref{thm:kl}).}
\end{enumerate}

\rev{\paragraph{Notation.}
We use $\|\cdot\|_2$ and $\|\cdot\|_\infty$ for the Euclidean and maximum norms of vectors, respectively, and $\langle\cdot,\cdot\rangle$ for the Euclidean inner product. The symbols $\mI_d$, $\cN(\mu,\Sigma)$, and $\ind\{E\}$ denote the $d$-dimensional identity matrix, the Gaussian law with mean $\mu$ and covariance $\Sigma$, and the indicator of an event $E$. For nonnegative quantities $U,V$, $U\lesssim V$ means $U\le CV$ for an absolute constant $C>0$ for fixed model parameters, and $U\asymp V$ means that both $U\lesssim V$ and $V\lesssim U$ hold. Constants $C,c>0$ may change between occurrences. Unless stated otherwise, they may depend on fixed model parameters but not on the sample size, network complexity, or time discretization.

For densities $p$ and $q$, we write
\[
 \operatorname{KL}(p\|q):=\int p(\mx)\log\frac{p(\mx)}{q(\mx)}\,\dd\mx,
 \qquad
 \operatorname{TV}(p,q):=\frac12\int|p(\mx)-q(\mx)|\,\dd\mx.
\]
For a positive deterministic sequence $b_n$, $Z_n=O_p(b_n)$ means that $Z_n/b_n$ is bounded in probability with respect to the training sample. The notation $\widetilde O$ or $\widetilde O_p$ suppresses factors that are powers of $\log n$.}

%% file: discussion.tex
\section{Discussion}\label{sec:conclusion}

\rev{We established finite-sample guarantees for fully discrete conditional diffusion by analyzing score learning and numerical sampling within a single framework. The resulting bounds relate the amount of training data to both generation accuracy and an explicit denoising step count. Under the stated density assumptions, the full and early-stopped samplers achieve the same averaged conditional KL error bound relative to the original conditional data distribution for every smoothness index $\beta>0$. Thus, the analysis shows how to preserve statistical accuracy from score estimation through finite-step generation.}

\rev{This framework opens several directions for further development. Incorporating optimization and artificial-noise Monte Carlo errors would extend the analysis to practical training procedures. Data-adaptive time grids and higher-order integrators could further reduce sampling effort at a prescribed statistical accuracy. Exploiting low-dimensional or factorizable conditional structure is another promising direction for sharpening the dependence on dimension.}